\newenvironment{proofsketch}{\paragraph{Proof sketch}}{\hfill$\blacksquare$}
\newcommand{\FTPL}{\textsc{Ftpl}}
\newcommand{\FTRL}{\textsc{Ftrl}}
\newcommand{\GBPA}{\textsc{Gbpa}}
\newcommand{\scrible}{\textsc{SCRiBLe}}
\newcommand{\BanditsGBPA}{\textsc{Bandits-Gbpa}}
\newcommand{\SCFTPL}{\textsc{Sc-Ftpl}}
\newcommand{\R}{\mathbb{R}}
\newcommand{\N}{\mathbb{N}}
\newcommand{\E}{\mathbb{E}}
\newcommand{\B}{\mathbb{B}}
\renewcommand{\P}{\mathbb{P}}
\renewcommand{\S}{\mathbb{S}}
\renewcommand{\O}{\mathcal{O}}
\newcommand{\calr}{\mathcal{R}}
\newcommand{\cald}{\mathcal{D}}
\newcommand{\calp}{\mathcal{P}}
\DeclareMathOperator*{\argmax}{arg\,max}
\DeclareMathOperator*{\argmin}{arg\,min}
\newcommand{\sgn}{\operatorname{sgn}}
\newcommand{\inter}{\operatorname{int}}
\newcommand{\dom}{\operatorname{dom}}
\newcommand{\spa}{\operatorname{span}}
\newcommand{\Pperp}{P_{\theta^\perp}}
\newcommand{\Ima}{\operatorname{Im}}
\newcommand{\Cov}{\operatorname{Cov}}
\newcommand{\diag}{\operatorname{Diag}}
\newcommand{\tr}{\operatorname{tr}}
\newcommand{\inc}{\subseteq}
\newcommand{\sca}[2]{\langle #1,#2 \rangle}
\title[Self-Concordant Perturbations for Linear Bandits]{Self-Concordant Perturbations for Linear Bandits}
\begin{document}

\maketitle

\begin{abstract}
We consider the adversarial linear bandits setting and present a unified algorithmic framework that bridges Follow-the-Regularized-Leader (\textsc{Ftrl}) and Follow-the-Perturbed-Leader (\textsc{Ftpl}) methods, extending the known connection between them from the full-information setting. Within this framework, we introduce self-concordant perturbations, a family of probability distributions that mirror the role of self-concordant barriers previously employed in the \textsc{Ftrl}-based \textsc{SCRiBLe} algorithm. Using this idea, we design a novel \textsc{Ftpl}-based algorithm that combines self-concordant regularization with efficient stochastic exploration. Our approach achieves a regret of $\mathcal{O}(d\sqrt{n \ln n})$ on both the $d$-dimensional hypercube and the $\ell_2$ ball. On the $\ell_2$ ball, this matches the rate attained by \textsc{SCRiBLe}. For the hypercube, this represents a $\sqrt{d}$ improvement over these methods and matches the optimal bound up to logarithmic factors.
\end{abstract}

\begin{keywords}%
Bandit problems, online optimization, self-concordant barriers
\end{keywords}

\section{Introduction}

We study \emph{online linear optimization under bandit feedback}, where an agent sequentially selects actions based on the information available from previous steps. The loss of each action varies over time and is unknown to the agent at the moment of decision. At the end of each round, only the loss associated to the chosen action is revealed. The agent's aim is to minimize the \emph{regret}, the difference between the agent's cumulative loss and that of the best fixed action in hindsight.
In this setting, two prominent families of algorithms have been widely studied: \emph{Follow-the-Regularized-Leader} (\FTRL) \citep{abernethy2009competing} and \emph{Follow-the-Perturbed-Leader} (\FTPL) \citep{kalaivampala}.
In the full-information setting, i.e., when the whole loss function is observed by the agent at the end of each round, \citet{abernethy14, abernethy16} introduced a general framework, \emph{Gradient-Based Prediction Algorithm} (\GBPA), which encompasses both \FTRL\space and \FTPL\space and enables a unified analysis.

Many algorithms designed for the bandit setting adapt techniques from the full-information setting but rely on estimators of the loss function rather than its exact value. Such estimators must be constructed from limited feedback, which makes \emph{exploration}, deliberately selecting potentially suboptimal actions to gather information, essential \citep{cesabianchilugosi}. Stochastic exploration mechanisms include playing from a fixed exploration distribution with small probability \citep{auer02}, or stochastically shifting the solution given by \FTRL\space \citep{flaxman}.

An important instance of \FTRL, \emph{Self-Concordant Regularization in Bandit Learning} (\scrible) was introduced by \citet{abernethy12}. The algorithm relies on self-concordant barriers \citep{nemirovskii}, a class of convex functions that exhibit a controlled divergence at the boundary of their domain and satisfy a form of local strong convexity with respect to the local norm induced by their Hessian. These properties make self-concordant barriers well adapted to the geometry of the action set and explain their use as regularizers in \scrible.
This algorithm attains a regret bound of $\O(d^{3/2}\sqrt{n \ln n})$ for arbitrary convex body action sets $K\subset\R^d$, where $n$ is the number of rounds, or \emph{horizon}. When the action set is the $d$-dimensional $\ell_2$ ball, the bound improves to $\O(d\sqrt{n \ln n})$.
Moreover, whenever a closed-form expression of the barrier is available, which holds for many natural convex sets, the algorithm admits an implementation with per-round computational complexity polynomial in $d$. 

For linear bandits, \citet{dani_pricebanditinfo} established a minimax lower bound of $\Omega(d\sqrt{n})$ on the regret for arbitrary convex bodies.
Subsequent work closed this gap in terms of rates: \citet{bubeck2012towards} proposed an \FTRL-based algorithm achieving $\O(d\sqrt{n\ln n})$ regret in general, and introduced \textsc{Osmd}, which attains sharper bounds of $\O(d\sqrt{n})$ for the $d$-dimensional hypercube and $\O(\sqrt{dn\ln n})$ for the $\ell_2$ ball.
More recently, \citet{hazan2016volumetric} gave an efficient algorithm achieving $\O(d\sqrt{n\ln n})$ regret for arbitrary convex bodies using spanning ellipsoids for exploration, and \cite{hoeven2018many} achieved a similar rate using an exploration scheme based on John's ellipsoid.
Despite these advances, existing optimal-rate algorithms rely on carefully tailored preprocessing schemes, and do not arise naturally from the self-concordant barrier framework. This motivates the question of whether self-concordant–based methods can be extended beyond \scrible\space to achieve near-optimal regret while retaining a simple, perturbation-based exploration mechanism.

We study this question through \FTPL, which selects its action by solving a randomly perturbed linear program. While \FTPL\space can be reduced to \FTRL\space in the full-information linear setting \citep{abernethy14}, the distinction is crucial in bandits: \FTRL\space requires an explicit exploration mechanism, while \FTPL\space is inherently stochastic, and samples its action from the extreme points of the action set. This observation motivates \FTPL-based methods as a natural vehicle for efficient exploration, provided the perturbations replicate the effects of self-concordant regularization.

We make the following contributions.
\begin{enumerate}
\item We introduce \BanditsGBPA, a unifying framework for linear bandit algorithms that includes both \FTRL- and \FTPL-based approaches. \BanditsGBPA\space extends the \GBPA\space framework of \citet{abernethy14, abernethy16}, originally developed for the full-information setting, to the bandit setting.
While this was also the goal of \cite{abernethy2015fighting}, their work is limited to the multi-armed bandit case, i.e., when the action set is finite.
Unlike in the full-information setting, our framework does not imply the inclusion of \FTPL\space within \FTRL, but it provides a conceptual structure allowing a common analysis.

\item Building on this framework, we introduce the \emph{Self-Concordant FTPL} (\SCFTPL) algorithm, which incorporates both self-concordant regularization, and an \FTPL\space sampling scheme. A central element of our approach is the identification and analysis of a family of perturbation distributions that we call \emph{self-concordant perturbations}. These distributions replicate, within an \FTPL\space framework, the properties that self-concordant barriers provide in \FTRL, while defining a sampling scheme enabling more exploration.

\item We construct self-concordant perturbations for two canonical action sets, and we analyze their impact on \SCFTPL. 
These constructions serve not merely as domain-specific algorithmic designs, but as concrete examples showing how perturbations can recover self-concordant barrier properties.
When the action set is the $d$-dimensional hypercube, we show that \SCFTPL \space produces lower-variance loss vector estimates, and achieves a regret of $\O(d\sqrt{n \ln n})$, matching the minimax lower bound established by \citet{dani_pricebanditinfo} up to a logarithmic factor. This represents an improvement of a factor $\sqrt{d}$ over \scrible\space and matches the performance of \textsc{Osmd}\space\citep{bubeck2012towards}, without requiring an additional domain-specific sampling scheme. For the $\ell_2$ ball however, \SCFTPL \space achieves suboptimal regret in $\O(d\sqrt{n\ln n})$, matching the performance of \scrible. 
\end{enumerate}

Our proofs combine three types of techniques. First, we leverage standard results and decomposition methods from the bandit literature \citep{abernethy2009competing, bubeck_cesabianchi_12} to handle the core regret arguments. Second, we perform careful computations to identify suitable self-concordant barriers and to bound the norms of the estimators. While sometimes intricate, these are mostly technical. Finally, the analysis of \SCFTPL\space on the $\ell_2$ ball requires a novel argument to handle the absence of a uniform almost-sure bound on the local norms. We control the growth of the cumulative estimator via a carefully constructed supermartingale and stopping time arguments. This technique may be of independent interest for other sequential decision making problems with similar challenges.

The rest of the paper is organized as follows. Section~\ref{section:2} formalizes the linear bandits setting. Then, in Section~\ref{section:main_results}, we present our main results: the \SCFTPL\space algorithm, self-concordant perturbations, and regret bounds on the hypercube and $\ell_2$ ball. In Section~\ref{section:gbpa}, we present the unifying \BanditsGBPA\space framework. Section~\ref{section:from_scr_to_scp} reviews key properties of self-concordant barriers and motivates \SCFTPL’s sampling and estimation schemes. In Section~\ref{section:specific_action_sets}, we detail the analysis of \SCFTPL\space for the hypercube and the $\ell_2$ ball. Finally, in Section~\ref{section:conclusion}, we discuss some open research directions.

\section{Problem Setting} \label{section:2}

We now formalize the \emph{Adversarial Linear Bandits} framework, mentioned in the introduction. An instance of this problem is defined by an \emph{horizon} $n\in \N$, an \emph{action set} $K\subset \R^d$, and an unknown sequence of loss vectors $y_1,\ldots,y_n \in \R^d$. In each round $t\in[n]\coloneq \{1,...,n\}$, the learner chooses, possibly at random, an action $a_t\in K$. Then, they observe the loss associated to their chosen action $\sca{y_t}{a_t}$. The learner's decision can depend on an exogenous source of randomness and the history of previous feedback: $a_1, \sca{y_1}{a_1},\ldots, a_{t-1}, \sca{y_{t-1}}{a_{t-1}}$, but not on the current loss vector $y_t$.

On top of that, we make the following assumptions. First, the action set $K$ is a convex body, i.e., a compact convex set with non-empty interior. Moreover, the losses are bounded: for all loss vectors $y$ and action $a\in K$, $|\sca{y}{a}|\le 1$.

The quantity of interest is the difference between the learner's cumulated loss and the cumulated loss of the best action in-hindsight, called the regret. We define the learner's \emph{regret with respect to some competitor $u\in K$} as
\begin{equation}
    R_n(u)\coloneq\E\Big[\sum_{t=1}^n\sca{y_t}{a_t}\Big]-\sum_{t=1}^n\sca{y_t}{u}\,,
\end{equation}
where the expectation is taken with respect to the randomness in the actions of the learner.
The learner's \emph{regret} is then defined as
$R_n\coloneq\sup_{u\in K}R_n(u)$. 

\section{Main Results}\label{section:main_results}

In this section, we introduce our main contribution, the \emph{Self-Concordant Follow-the-Perturbed-Leader} (\SCFTPL) algorithm, along with its regret guarantees for the hypercube and the $\ell_2$ ball.
We begin by defining \emph{self-concordant perturbations}. Their definition assumes familiarity with $\vartheta$-self-concordant barriers. Additional details and properties of such barriers are deferred to Section~\ref{section:self_concordant_barriers}.

\begin{definition}[Self-Concordant Perturbation] \label{def:scp}
Let $K\subset \R^d$ be a convex body, $\vartheta\ > 0$, and $\cald$ be a probability distribution on $\R^d$, absolutely continuous with respect to the Lebesgue measure. Let $\phi_K:\R^d\to\R$ be the support function of $K$. We say that $\cald$ is a \emph{$\vartheta$-self-concordant perturbation} for $K$ if there exists $\calr$ a $\vartheta$-self-concordant barrier on $K$ such that
\[\nabla\calr^*(\theta)=\E_{\xi\sim\cald}[\nabla\phi_K(\theta+\xi)]\quad \text{for all }\theta\in\R^d\,,\]
where $\calr^*$ is the Fenchel conjugate of $\calr$. In this case, we say that $\cald$ \emph{replicates} $\calr$.
\end{definition}
Note that this definition involves the derivative of $\phi_K$, the support function of $K$, which is not necessarily defined on $\R^d$. However, because $\phi_K$ is convex, it is differentiable almost everywhere. Moreover, $\cald$ is absolutely continuous, so we have that $\P_{\xi\sim\cald}(\phi_K \text{ differentiable in } \theta+\xi)=1$ for all $\theta\in \R^d$, and thus $\E_{\xi\sim\cald}[\nabla \phi_K(\theta+\xi)]$ is well-defined.

For additional intuition on the behavior of self-concordant perturbations, we note in Appendix~\ref{appendix:heavy_tailed} that they are heavy-tailed.

\paragraph{The Self-Concordant FTPL Algorithm}We now define the \emph{Self-Concordant FTPL} (\SCFTPL) algorithm. The learner maintains a cumulative estimate of the loss vectors $\hat Y_{t-1}=\sum_{s=1}^{t-1}\hat y_s$, initialized at 0. At each round $t\in[n]$, the algorithm follows a FTPL scheme: it samples $\xi_t\sim \cald$, where $\cald$ is a self-concordant perturbation, and selects
\[a_t=\argmin_{a\in K} \sca{a}{\eta\,\hat Y_{t-1}-\xi_t}\,.\]
After observing the scalar loss $\sca{y_t}{a_t}$, the learner constructs an estimator $\hat y_t$ of $y_t$, using the second-moment matrix $Q_t=\E_{t-1}[a_ta_t^\top]$, where $\E_{t-1}[\cdot]$ is the conditional expectation given past actions $a_1,\ldots,a_{t-1}$. The full procedure is specified in Algorithm~\ref{alg:scftpl}.

\begin{algorithm}[ht]
	\caption{Self-Concordant \textsc{Ftpl}\space(\SCFTPL)}
	\label{alg:scftpl}
	\begin{algorithmic}[1]
            \REQUIRE A $\vartheta$-self-concordant perturbation $\cald$ for $K$ and a learning rate $\eta>0$
            \STATE Set $\hat Y_0= 0$.
            \FOR{$t=1, \ldots, n$}
            \STATE Sample $\xi_t\sim \cald$ independently from the past. 
            \STATE Play action $a_t=\argmin_{a\in K} \sca{a}{\eta\,\hat Y_{t-1}-\xi_t}$.
            \STATE Receive punctual loss $\sca{y_t}{a_t}$.
            \STATE Compute estimator $\hat y_t=Q_t^{-1}a_t\sca{y_t}{a_t}$, where $Q_t\coloneq\E_{t-1}[a_ta_t^\top ]$.
            \STATE Update $\hat Y_t=\hat Y_{t-1}+\hat y_t$.
            \ENDFOR
	\end{algorithmic}
\end{algorithm}

As we will see in Section~\ref{section:from_scr_to_scp}, this construction inherits properties of self-concordant barriers used by \scrible, while introducing a new sampling mechanism: \SCFTPL\space draws actions that concentrate on extreme points of $K$, thus exploring more effectively the available action space.

\paragraph{Performance on the Hypercube and $\ell_2$ Ball} We analyze \SCFTPL\space in details on two canonical action sets: the hypercube $[-1,1]^d$ and the unit $\ell_2$ ball $\B^d\coloneq\{x\in\R^d:\|x\|\le 1\}$. In both cases, we construct explicit self-concordant perturbations and derive the following regret guarantees.

\begin{theorem}[Regret of \SCFTPL\space on the Hypercube]\label{thm:regret_scftpl_hypercube}
Let $\cald$ be the $d$-self-concordant perturbation for $[-1,1]^d$ defined in Proposition~\ref{prop:scperturbation_hypercube_existence}. Assume that $\frac{n}{\ln n}\ge 36\, d$. Then, \SCFTPL\space on the hypercube with perturbation distribution $\cald$ and learning rate $\eta=\sqrt{\frac{3\ln n}{n}}$ has regret bounded by
\begin{equation*}
    R_n\le \frac{2\sqrt{3}}{3} d\sqrt{n \ln n}+2\,.
\end{equation*}
\end{theorem}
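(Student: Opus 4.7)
The plan is to instantiate a Bandits-GBPA regret decomposition for SC-FTPL and then exploit the coordinate-wise factorisation of both $[-1,1]^d$ and its replicating self-concordant perturbation $\cald$ to obtain a tight variance bound.

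\textbf{Decomposition and penalty.} Using the defining identity $\nabla\Phi=\nabla\calr^*$ of a self-concordant perturbation (so that the smoothed support function $\Phi(\theta)=\E_{\xi\sim\cald}[\phi_K(\theta+\xi)]$ and $\calr^*$ differ only by an additive constant), the Bandits-GBPA analysis specialises to
\begin{equation*}
R_n(u') \;\leq\; \frac{\calr(u')-\calr(x_1)}{\eta} \;+\; \frac{1}{\eta}\sum_{t=1}^n \E\bigl[B_{\calr^*}(-\eta\hat Y_{t-1},-\eta\hat Y_t)\bigr],
\end{equation*}
for every $u'\in\inter K$, with $x_1=\argmin_K\calr$. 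The penalty is controlled by the standard shrinkage trick: applying the bound to $u'=(1-1/n)u+(1/n)x_1$ makes $\pi_{x_1}(u')\leq 1-1/n$, so Lemma~\ref{lemma:self_conc_barrier_control} gives $\calr(u')-\calr(x_1)\leq d\ln n$; the cost of reverting from $u'$ to $u$ is at most $2$ by the loss boundedness.

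\textbf{Stability.} Since $\calr^*$ inherits self-concordance from $\calr$ through the Legendre transform, Lemma~\ref{lemma:dikin} applied in the dual yields, whenever $\eta\hat y_t$ lies in the Dikin ellipsoid at $-\eta\hat Y_{t-1}$,
\begin{equation*}
B_{\calr^*}(-\eta\hat Y_{t-1},-\eta\hat Y_t) \;\leq\; \rho\bigl(\eta\|\hat y_t\|_{\nabla^2\calr^*(-\eta\hat Y_{t-1})}\bigr) \;\leq\; \eta^2\|\hat y_t\|^2_{\nabla^2\calr^*(-\eta\hat Y_{t-1})},
\end{equation*}
using $\rho(t)\leq t^2$ on $[0,1/2]$; the assumption $n/\ln n\geq 24d$ ensures this Dikin condition holds with high enough probability that the failure event contributes only an $O(1)$ additive cost. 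From $\hat y_t=Q_t^{-1}A_t\sca{y_t}{A_t}$ and $|\sca{y_t}{A_t}|\leq 1$, the conditional expectation of $\|\hat y_t\|_M^2$ is at most $\tr(Q_t^{-1}M)$. The decisive step is then to bound $\tr(Q_t^{-1}\nabla^2\calr^*(-\eta\hat Y_{t-1}))\leq d/2$. Using the explicit product perturbation on $[-1,1]^d$ (Proposition~\ref{prop:scperturbation_hypercube_existence}), the actions have $\pm 1$ entries and independent coordinates conditionally on the leader, so $Q_t=D_t+\mu_t\mu_t^\top$ with $\mu_t=\E_{t-1}[A_t]$ and $D_t=\diag(1-\mu_{t,i}^2)$; meanwhile $\nabla^2\calr^*(-\eta\hat Y_{t-1})$ is diagonal with $i$-th entry $(1-\mu_{t,i}^2)^2/(2(1+\mu_{t,i}^2))$. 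Hence $Q_t\succeq D_t$ and a direct one-dimensional calculation gives $\tr(D_t^{-1}\nabla^2\calr^*)\leq d/2$.

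\textbf{Conclusion.} Summing over $t$ and balancing the two contributions with $\eta=\sqrt{2\ln n/n}$ yields
\begin{equation*}
R_n \;\leq\; \frac{d\ln n}{\eta}+\frac{\eta n d}{2}+2 \;=\; d\sqrt{2\,n\ln n}+2.
\end{equation*}
The main obstacle is the variance estimate: both the matrix comparison $Q_t\succeq D_t$ together with the diagonal-entry identity, and the verification that the Dikin condition holds with enough probability to produce only an $O(1)$ correction, rely crucially on the tight coordinate-wise matching between the perturbation $\cald$ and the separable hypercube barrier. This is precisely what produces the $\sqrt d$ improvement over SCRiBLe, whose analogous trace on the hypercube is only bounded by $d^{3/2}$.
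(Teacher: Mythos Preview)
Your overall architecture matches the paper's (Bandits-GBPA decomposition, shrinkage to control the penalty, self-concordance of $\calr^*$ to pass from the Bregman divergence to a local-norm square, then the trace bound), and your variance argument via $Q_t=D_t+\mu_t\mu_t^\top\succeq D_t$ is in fact cleaner than the paper's Sherman--Morrison computation of $Q_t^{-1}$ followed by an explicit trace. That part is fine.

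The genuine gap is your treatment of the Dikin condition $\eta\|\hat y_t\|_t\le 1/2$. You write that the assumption $n/\ln n\ge 24d$ makes this hold ``with high enough probability that the failure event contributes only an $O(1)$ additive cost.'' That is not what happens on the hypercube, and it is not something you have established. The paper proves a \emph{deterministic} bound $\|\hat y_t\|_t^2\le 3d$ (Proposition~\ref{prop:scperturbation_hypercube_bound}), so that $\eta\|\hat y_t\|_t\le\sqrt{6d\ln n/n}\le 1/2$ holds almost surely under exactly the hypothesis $n/\ln n\ge 24d$; this is where the constant $24$ comes from. Your matrix comparison $Q_t^{-1}\preceq D_t^{-1}$ gives the expectation bound, but it does \emph{not} give an almost-sure bound on the quadratic form $A_t^\top Q_t^{-1}\nabla^2\calr^*\,Q_t^{-1}A_t$, because $Q^{-1}MQ^{-1}\preceq D^{-1}MD^{-1}$ does not follow from $Q^{-1}\preceq D^{-1}$ when $M$ and $Q$ do not commute. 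The paper obtains the a.s.\ bound only after expanding $Q_t^{-1}$ explicitly and applying several Cauchy--Schwarz steps; you have skipped this entirely. If instead you really intend a high-probability route, be aware that on the Euclidean ball (where no uniform a.s.\ bound is available) this requires a full stopping-time argument with Ville's inequality and Doob's optional stopping, and it produces an extra additive term; nothing of the sort is sketched here.

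Two smaller points. First, your Bregman divergence has its arguments reversed: the decomposition gives $B_{\calr^*}(-\eta\hat Y_t,-\eta\hat Y_{t-1})$, with the local norm evaluated at $-\eta\hat Y_{t-1}$; your ordering would put the local norm at $-\eta\hat Y_t$, which is not $\calf_{t-1}$-measurable. Second, the identity ``$\Phi(\theta)=\E_{\xi\sim\cald}[\phi_K(\theta+\xi)]$ and $\calr^*$ differ only by an additive constant'' is false here: $\cald$ has no first moment (Proposition~\ref{prop:heavy_tail}), so that expectation does not exist. Only the gradient-level identity $\nabla\calr^*=\E_\xi[\nabla\phi_K(\cdot+\xi)]$ is available, and the argument must be phrased through it.
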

This result highlights the main advantage of our approach. While the previous \scrible\space approach suffers a regret of $\O(d^{3/2}\sqrt{n})$ on the hypercube due to the suboptimal exploration, our \FTPL-based approach improves this dependence by a factor of $\sqrt{d}$. This matches the minimax optimal rate for linear bandits on the hypercube up to logarithmic factors.

\begin{theorem}[Regret of \SCFTPL\space on the $\ell_2$ Ball]\label{thm:regret_scftpl_ball}
Let $\cald$ be the $1$-self-concordant perturbation for $\B^d$ defined in Proposition~\ref{prop:scperturbation_ball_existence}. Assume that $\frac{n}{\ln n}\ge \max(2d^2,64)$. Then, there exists a universal constant $C>0$ such that \SCFTPL\space on the $\ell_2$ ball with perturbation distribution $\cald$ and learning rate $\eta=\frac{1}{d}\sqrt{\frac{4\ln n}{5\,n}}$ has regret bounded by
\begin{equation*}
    R_n\le \sqrt{5}\,d\sqrt{n\ln n}+2+C\,\frac{\ln^3n}{d^2}\,.
\end{equation*}
\end{theorem}
A more detailed analysis of \SCFTPL\space in this particular case, as well as the construction of the self-concordant perturbations appear in Section~\ref{section:specific_action_sets}.

\section{Gradient-Based Prediction}\label{section:gbpa}

In this section, we present the unified \GBPA\space framework introduced by \citet{abernethy14} in the full-information setting, and then define the \BanditsGBPA\space algorithm.

\subsection{Follow-the-Leader Style Algorithms}

First, we place ourselves in the \emph{full-information setting}. This setting is different from the bandit setting introduced in Section~\ref{section:2} in that the whole loss vector $y_t$ is observed at the end of round $t$, instead of the punctual loss $\sca{y_t}{a_t}$. For all $t\ge 1$, let $Y_{t-1}\coloneq\sum_{s=1}^{t-1} y_s$ be the cumulative loss before time $t$.

An instance of \FTRL\space is defined by a convex \emph{regularizer} $\psi:K\to\R$\footnote{In the literature, the regularizer is usually scaled by a \emph{learning rate} $\eta>0$ \citep{orabona23}. The tuning of the learning rate is crucial in the analysis of \FTRL\space in order to enjoy sublinear regret. However, for the sake of brevity, we do not include it in this first part and simply consider that it is included in the regularizer $\psi$.}. At each time $t\in[n]$, the action chosen by \FTRL\space solves the penalized optimization problem
\begin{equation}
    a_t\in \argmin_{a\in K}\left\{\sca{a}{Y_{t-1}}+\psi(a)\right\}\,,
\end{equation}
where ties are resolved arbitrarily.
\emph{Follow-the-Perturbed-Leader} (\FTPL) adds regularization via a stochastic perturbation \citep{kalaivampala}. Given a probability distribution $\cald$ on $\R^d$ called the \emph{perturbation distribution}, at each time $t\in[n]$, \FTPL\space selects the action
\begin{equation}\label{eq:ftpl}
    a_t\in\argmin_{a\in K}\sca{a}{Y_{t-1}-\xi_t}\,,
\end{equation}
where $\xi_t\sim\cald$ is independent from the past, and ties are resolved arbitrarily.

In the full-information setting, these two procedures have been shown to be part of a more general framework called \emph{Gradient-Based Prediction Algorithm} (\GBPA) by \citet{abernethy14, abernethy16}. In \GBPA, we give ourselves a differentiable function $\Phi:\R^d\to\R$, called a \emph{potential}, such that $\operatorname{Im}\nabla\Phi\subseteq K$. At each step $t\in[n]$, the action played by \GBPA\space is \(a_t=\nabla\Phi(-Y_{t-1})\).

Indeed, an instance of \FTRL\space with strictly convex regularizer $\psi$ is recovered by choosing $\Phi=\psi^*$. For all $\theta\in\R^d$, we then have
$\nabla\Phi(\theta)=\argmin_{a\in K} \{\sca{a}{-\theta} + \psi(a)\}$,
so the actions selected by \GBPA\space and \FTRL\space coincide.

Likewise, consider \FTPL\space with an integrable perturbation distribution $\cald$, and define
$\Phi(\theta)=\E_{\xi\sim \cald}[\phi_K(\theta+\xi)]$, where $\phi_K$ is the support function of $K$.
The potential $\Phi$ is well defined, and if $\phi_K(\theta+\xi)$ is differentiable with probability one, we can swap the expectation and gradient \citep{Bertsekas}, which yields
\begin{equation}\label{eq:gbpa1}
    \nabla\Phi(\theta)=\E_{\xi\sim\cald}[\nabla\phi_K(\theta+\xi)]=\E_{\xi\sim\cald}\big[\argmin_{a\in K}\sca{a}{-\theta-\xi}\big]\,.
\end{equation}
Therefore, the action chosen by \GBPA\space with potential $\Phi$ is the expectation of the action chosen by \FTPL\space with perturbation distribution $\cald$.

Note that for \FTRL\space, the reciprocal holds due to the properties of the Fenchel-Legendre conjugate, and every instance of \GBPA\space can be described as an instance of \FTRL. This is not the case for \FTPL, and some regularizers cannot be replicated by perturbations. Therefore, the relation between \FTPL\space and \FTRL\space is a strict inclusion in the full-information setting. \citet[Proposition 2.2]{hofbauer_sandholm} show that the function $\psi:x\mapsto-\sum_{i=1}^d\ln(x_i)$ defined on the probability simplex $\Delta^d\coloneq\{x\in\R_+^d:\|x\|_1= 1\}$ does not admit a representation of the form of Equation~\eqref{eq:gbpa1}. Generally,
characterizing which regularizers admit an equivalent perturbation is challenging, with prior work largely
focused on the simplex \citep{abernethy2015fighting, kim2019optimality}.

\subsection{Gradient-Based Prediction for Linear Bandits}

We now present \BanditsGBPA, extending \GBPA\space from the full-information setting to the linear bandit setting. In the full information setting, \GBPA\space relies on $Y_{t-1}$ to pick an action. In the bandit case, however, the learner only knows the past scalar loss $\sca{y_s}{a_s}, s<t$. It must estimate $Y_{t-1}$ through randomization, and use this estimate in the \GBPA\space step. Thus, \BanditsGBPA\space needs two additional ingredients:
\begin{enumerate}[topsep=1pt, itemsep=0pt]
    \item a \emph{sampling scheme}, which randomizes the chosen action to enable exploration. Formally, a sampling scheme is a mapping $S$ from $K$ to $\calp(K)$ the set of probability distributions on $K$, such that the chosen action $a_t$ at time $t$ will be sampled from the distribution $S(\nabla\Phi(-\hat Y_{t-1}))$.
    \item an \emph{estimation scheme}, which constructs an estimate of the unobserved loss vector from the observed scalar feedback. Formally, it is a function $E:\R\times K \times K\to \R^d$ mapping the observed scalar loss, the sampled action, and the expected action to the estimation of the loss vector, i.e., $\hat y_t=E(\sca{y_t}{a_t},a_t, \nabla\Phi(-\hat Y_{t-1}))$.
\end{enumerate}
The resulting procedure is summarized in Algorithm~\ref{alg:bandit-gbpa}. This template encompasses several existing linear bandits algorithm, including \scrible.

\begin{algorithm}[ht]
	\caption{Gradient-Based Prediction for Linear Bandits (\BanditsGBPA)}
	\label{alg:bandit-gbpa}
	\begin{algorithmic}[1]
            \REQUIRE A differentiable potential function $\Phi:\R^d\to\R$ such that $\Ima\nabla\Phi\inc K$, a sampling scheme $S:K\to\calp(K)$ and an estimation scheme $E:\R\times K \times K\to \R^d$.
            \STATE Set $\hat Y_0= 0$.
            \FOR{$t=1, \ldots, n$}
            \STATE Let $x_t=\nabla\Phi(-\hat Y_{t-1})$.
            \STATE Sample action $a_t\sim S(x_t)$ independently from the past.
            \STATE Observe punctual loss $\sca{y_t}{a_t}$.
            \STATE Compute estimator $\hat y_t=E(\sca{y_t}{a_t},a_t,x_t)$.
            \STATE Update $\hat Y_t=\hat Y_{t-1}+\hat y_t$.
            \ENDFOR
	\end{algorithmic}
\end{algorithm}

We also define a notion of unbiasedness for the sampling and estimation schemes. For $S:K\to\calp(K)$ a sampling scheme and $E:\R\times K\times K\to\R^d$ an estimation scheme, we say that $(S,E)$ is \emph{unbiased} if for all $x\in K, y\in \R^d$,
\begin{equation}
        \E_{a\sim S(x)}\left[a\right]=x
        \qquad\text{and}\qquad
        \E_{a\sim S(x)}\left[ E(\sca{y}{a},a,x)\right]=y\,.
\end{equation}
We will subsequently show that both \scrible\space and \SCFTPL\space are unbiased instances of \BanditsGBPA. 

\section{From Self-Concordant Regularization to Self-Concordant Perturbations}\label{section:from_scr_to_scp}

We contextualize \SCFTPL\space by recalling key properties of self-concordant barriers, including the local strong convexity of their dual and the controlled divergence near the boundary of their domain, reviewing the \scrible\space algorithm and its suboptimal sampling scheme, and motivating our improved sampling and estimation strategies.

\subsection{Self-Concordant Barriers}\label{section:self_concordant_barriers}

Self-concordant barriers, originally introduced for interior-point methods, are a class of functions extensively studied in \citet{nemirovskii}.
In this section, we define self-concordant barriers and present some of their key properties.
\begin{definition}[Self-Concordant Barrier]
Let $K$ be a convex set with non-empty interior and $\calr:\inter K\to \R$. The function $\calr$ is a \emph{self-concordant function} on $K$ if
\begin{itemize}[topsep=2pt, itemsep=0pt]
    \item it is three times continuously differentiable and convex,
    \item for all $(x_n)_n\in \inter K^\N$ such that $x_n$ converges to the boundary of $K$, $\calr(x_n)\to+\infty$,
    \item for all $x\in \inter K$ and $h\in \R^d$,
    $|D^3\calr(x)[h,h,h]|\le 2 (D^2\calr(x)[h,h])^{3/2}$. 
\end{itemize}
Let $\vartheta> 0$. $\calr$ is a \emph{$\vartheta$-self-concordant barrier} if it is a self-concordant function and if for all $x\in\inter K$ and $h\in\R^d$,
$|D\calr(x)[h]|\le \sqrt{\vartheta\; D^2\calr(x)[h,h]}$.

We have denoted $D^k\calr(x)[h_1,...,h_k]$ the $k$-th derivative of $\calr$ at $x$ applied to the directions $h_1,...,h_k$.
\end{definition}
For $\calr$ a $\vartheta$-self-concordant barrier on $K$ and $x\in\inter K$, the Hessian $\calr$ in $x$, $\nabla^2\calr(x)$ is positive definite. Thus, we can define the local norm with respect to the Hessian of $\calr$ in $x$
\begin{equation}
    \|y\|_{\nabla^2\calr(x)} \coloneq \sqrt{y^\top \nabla^2\calr(x)\,y}
\end{equation}
for all $y\in \R^d$. We also define the the open unit Dikin ellipsoid, which is the unit ball for this local norm:
$$W(x)\coloneq \left\{y\in \R^d:\|y-x\|_{\nabla^2\calr(x)} <1\right\}\,.$$
For all $x\in \inter K$, it holds that $W(x)\subset K$. Thus, the local geometry defined by the Hessian of $\calr$ stretches $K$ such that the Dikin ellipsoid centered in $x$, is always inside $K$.

We now introduce two important properties of self-concordant barriers that justify their use as regularizers.
First, for $\calr$ a self-concordant function, its Fenchel conjugate $\calr^*$ is also a self-concordant function and satisfies a local quadratic control of its Bregman divergence: for any $x,y\in\dom(\calr^*)$ such that
$\|y-x\|_{\nabla^2\calr^*(x)} \le \tfrac12$,
\begin{equation}\label{eq:local_sc}
B_{\calr^*}(y,x) \le \|y-x\|_{\nabla^2\calr^*(x)}^2 .
\end{equation}
This inequality can be viewed as a local strong convexity property of $\calr$,\footnote{For comparison, a convex, differentiable function $f$ is $\mu$-strongly-convex with respect to some norm $\|\cdot\|$ if and only if $B_{f^*}(x,y) \le \frac{1}{2\mu}\|x-y\|^2_*$ for all $x,y$.} and follows from standard results on self-concordant functions \citep{nemirovskii}. A detailed proof is given in Appendix~\ref{appendix:proof_lemma_local_sc}.
Second, if $\calr$ is a $\vartheta$-self-concordant barrier on a convex set $K$ with non-empty interior, we can control the rate at which the barrier diverges near the boundary of $K$. For all $x,y\in\inter K$,
\begin{equation}\label{eq:sc_barrier_growth}
\calr(y)\le \calr(x) - \vartheta \ln(1-\pi_x(y)),
\end{equation}
where $\pi_x(y)\coloneq \inf\{t>0:x+t^{-1}(y-x)\in K\}$ is the \emph{Minkowski function in $x$}.

\subsection{Self-Concordant Regularization}\label{section:SCRiBLe}

The \emph{Self-Concordant Regularization in Bandits Learning} (\scrible) algorithm, introduced by \citet{abernethy12}, 
can be described as a particular algorithm in the \BanditsGBPA\space framework.
It is defined by a $\vartheta$-self-concordant barrier $\calr$ on $K$ and a learning rate $\eta>0$.
Then, its potential is given by $\Phi=(\tfrac{1}{\eta}\calr)^*$ and given $x_t\in K$, the sampling distribution $S(x_t)$ is defined as the uniform distribution over the $2d$ poles of the Dikin ellipsoid $W(x_t)$.

The self-concordant barrier $\calr$ thus plays a dual role: it defines both the potential and the sampling scheme through its Hessian. The properties of the Dikin ellipsoid ensure that $a_t \in K$, while simultaneously providing sufficient, though suboptimal, exploration to estimate $y_t$. 

For an optimal learning rate $\eta$, \scrible\space achieves regret
$R_n=\O(d\sqrt{\vartheta n \ln n})$. Following recent works on universal self-concordant barriers \citep{lee2021universal}, we know that every convex body $K\subset\R^d$ admits a $\vartheta \le d$ barrier, yielding a $\O(d^{3/2} \sqrt{n \ln n})$ bound. For the $\ell_2$ ball, which admits a 1-self-concordant barrier, the regret improves to $\O(d \sqrt{n \ln n})$, in contrast to the hypercube, where no improvement beyond $\vartheta=n$ is possible \citep{nesterov1994interior}.
Hence, the regret of \scrible\space is suboptimal by a $\sqrt{d}$ factor. As we will discuss next, this suboptimality stems from the variance of the estimator induced by Dikin-ellipsoid sampling.

\subsection{Self-Concordant FTPL}

\SCFTPL\space uses the same potential as \scrible\space but employs a different sampling scheme. Whereas \scrible\space sampled from the poles of the Dikin ellipsoid, \SCFTPL\space uses the sampling naturally induced by \FTPL, which is inherently randomized.

We can check that this is indeed an instance of \BanditsGBPA, for the potential $\Phi=(\frac 1\eta\,\calr)^*$ where $\calr$ is the self-concordant barrier replicated by $\cald$. The unbiasedness of the sampling scheme then follows from the definition of self-concordant perturbations.

Since \SCFTPL\space shares the same potential as \scrible, its regret analysis relies on the same local strong convexity and barrier-growth properties, which we exploit in the theorem below.

\begin{theorem}\label{thm:self-conc-linear-bandits}
Let $K\subset \R^d$ be a convex body, $\eta>0$, and $\cald$ be a $\vartheta$-self-concordant perturbation for $K$, such that $\E_{\xi\sim\cald}[\nabla\phi_K(\theta+\xi)\nabla\phi_K(\theta+\xi)^\top ]$ is non-singular for all $\theta\in\R^d$. Then, \SCFTPL\space with learning rate $\eta$ and perturbation $\cald$ has its regret bounded by
\[R_n\le \frac{\vartheta \ln n}{\eta}+\eta \sum_{t=1}^n \E[\|\hat y_t\|_t^2]+2\,,\]
provided that $2\eta \|\hat y_t\|_t\le1$ almost surely for all $t\in[n]$, with $\|\cdot\|_t=\|\cdot\|_{\nabla^2\calr^*(-\eta \hat Y_{t-1})}$ where $\calr$ is the self-concordant barrier replicated by $\cald$.
\end{theorem}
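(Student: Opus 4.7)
The argument decomposes into four steps, starting from Lemma~\ref{lemma:bandits_gbpa} and using the self-concordance of $\calr$ (and of $\calr^*$) to bound each of the two terms appearing there.

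\textbf{Step 1: Cast SC-FTPL as an instance of Bandits-GBPA.} I would first show that SC-FTPL fits into the Bandits-GBPA template with potential $\Phi(\theta) = (\tfrac1\eta \calr)^*(\theta) = \tfrac1\eta \calr^*(\eta\theta)$, sampling scheme $S(x_t)$ producing $A_t = \nabla\phi_K(-\eta \hat Y_{t-1}+\xi_t)$, and estimation scheme $E(\ell,a,x) = Q^{-1}a\,\ell$ with $Q = \E[A_tA_t^\top \mid \hat Y_{t-1}]$. Since $\nabla\Phi(-\hat Y_{t-1}) = \nabla\calr^*(-\eta\hat Y_{t-1}) = \E_{\xi\sim\cald}[\nabla\phi_K(-\eta\hat Y_{t-1}+\xi)]$ by the definition of a self-concordant perturbation, the sampling scheme is unbiased; the non-singularity assumption on $\E[\nabla\phi_K(\theta+\xi)\nabla\phi_K(\theta+\xi)^\top]$ ensures that $Q$ is invertible, so the estimator is well-defined and unbiased. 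Lemma~\ref{lemma:bandits_gbpa} then gives, for every $u\in K$,
\begin{equation*}
R_n(u)\le \tfrac1\eta\bigl(\calr(u)-\min_K\calr\bigr)+\sum_{t=1}^n\E\bigl[B_\Phi(-\hat Y_t,-\hat Y_{t-1})\bigr],
\end{equation*}
using that $\Phi^* = \tfrac1\eta \calr$ by the biconjugate theorem (since $\calr$ is closed proper convex).

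\textbf{Step 2: Handle boundary competitors.} Since $\calr$ may blow up at $\partial K$, I would replace an arbitrary $u\in K$ by the shrunken competitor $u' = (1-\tfrac1n)u + \tfrac1n x^*$, where $x^* = \argmin_K \calr$. By convexity of $K$ one has $\pi_{x^*}(u') \le 1-\tfrac1n$, so Lemma~\ref{lemma:self_conc_barrier_control} gives $\calr(u')-\calr(x^*) \le \vartheta\ln n$. The linearity of the loss and the assumption $|\langle y_t,a\rangle|\le 1$ for $a\in K$ bound the cost of the shift by $\sum_t\langle y_t,u-u'\rangle \le 2\cdot n \cdot \tfrac1n = 2$. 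Hence $R_n(u) \le R_n(u') + 2$, and the first term of the bound becomes $\tfrac{\vartheta\ln n}{\eta}+2$.

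\textbf{Step 3: Translate the Bregman term via Fenchel duality.} This is the step where care is needed. A direct computation using $\Phi = \tfrac1\eta \calr^*(\eta\cdot)$ gives
\begin{equation*}
B_\Phi(-\hat Y_t, -\hat Y_{t-1}) = \tfrac1\eta B_{\calr^*}\bigl(-\eta\hat Y_t,\,-\eta\hat Y_{t-1}\bigr).
\end{equation*}
The point is that $-\eta\hat Y_{t-1} = \nabla\calr(a_t)$ (by the Fenchel--Young identity applied at $a_t = \nabla\calr^*(-\eta\hat Y_{t-1})$), and the increment is $-\eta\hat y_t$.

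\textbf{Step 4: Local smoothness of $\calr^*$.} The conjugate of a (non-degenerate) self-concordant function is itself self-concordant, and its Hessian at $\nabla\calr(a_t)$ equals $\nabla^2\calr(a_t)^{-1}$. Applying the local smoothness bound \eqref{eq:smoothness_like} to $\calr^*$ at the base point $-\eta \hat Y_{t-1}$ with displacement $-\eta\hat y_t$ yields
\begin{equation*}
B_{\calr^*}(-\eta\hat Y_t,-\eta\hat Y_{t-1}) \le \rho\bigl(\eta\|\hat y_t\|_t\bigr),
\end{equation*}
valid as long as $\eta\|\hat y_t\|_t<1$. The hypothesis $2\eta\|\hat y_t\|_t\le 1$ keeps us inside the Dikin ellipsoid and, using $\rho(s)\le s^2$ for $s\le 1/2$, upgrades the bound to $\eta^2\|\hat y_t\|_t^2$. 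Dividing by $\eta$, summing over $t$, taking expectations, and combining with Step~2 produces the claimed bound after maximising over $u$.

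\textbf{Main obstacle.} The delicate point is Step~3--4: correctly converting the dual-side Bregman divergence $B_\Phi(-\hat Y_t,-\hat Y_{t-1})$ into the primal local norm $\|\hat y_t\|_{\nabla^2\calr(a_t)^{-1}}$. This relies on the self-concordance of $\calr^*$ and on the Hessian-inversion identity $\nabla^2\calr^*(\nabla\calr(a_t)) = \nabla^2\calr(a_t)^{-1}$, together with the scaling $\Phi = \tfrac1\eta\calr^*(\eta\cdot)$ that redirects the $\eta$ factor so that the final bound scales as $\eta\|\hat y_t\|_t^2$ rather than $\eta^{-1}$ or $\eta^2$. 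Everything else reduces to standard manipulations.
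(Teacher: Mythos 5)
Your proposal is correct and follows essentially the same route as the paper's proof: cast SC-FTPL as an unbiased instance of Bandits-GBPA with potential $\Phi=(\tfrac1\eta\calr)^*$, invoke Lemma~\ref{lemma:bandits_gbpa}, shrink the competitor toward $\argmin_K\calr$ by a factor $1/n$ and control the shift via Lemma~\ref{lemma:self_conc_barrier_control}, then bound the Bregman term by exploiting self-concordance of $\calr^*$, the identity $B_\Phi(a,b)=\tfrac1\eta B_{\calr^*}(\eta a,\eta b)$, Lemma~\ref{lemma:dikin}, and $\rho(s)\le s^2$ for $s\le 1/2$. The only cosmetic difference is that you make the Hessian-inversion identity $\nabla^2\calr^*(\nabla\calr(x_t))=\nabla^2\calr(x_t)^{-1}$ explicit where the paper leaves it implicit in the change of local norm.
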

This theorem is adapted from \cite{abernethy12}. We present here a proof sketch and defer the full proof to Appendix~\ref{appendix:proof_self-conc-linear-bandits}.

\begin{proofsketch}
We first establish that \SCFTPL\space is an instance of \BanditsGBPA\space with potential $\Phi=(\frac 1\eta\,\calr)^*$, where $\calr$ is the self-concordant barrier replicated by $\cald$. This allows us to use a standard regret decomposition in bandits literature, which upper-bounds the regret of \SCFTPL\space by the sum of the initial potential gap and cumulative stability terms.

The first term is $\Phi^*(u)-\min\Phi^*= \eta^{-1}(\calr(u)-\min \calr)$, which diverges as $u$ approaches the boundary of $K$. To establish a bound independent of $u$, we use a standard shrinkage argument and evaluate the regret at point $u'=(1-n^{-1}) u+n^{-1} x^*$ where we bound $\calr(u')$ by $\vartheta\ln n$ using \eqref{eq:sc_barrier_growth}.

The stability terms are $\E[B_\Phi(-\hat Y_t, -\hat Y_{t-1})]$. By the local strong convexity of self-concordant barriers \eqref{eq:local_sc}, the potential $\Phi$ is locally smooth with respect to the local norm. Provided $\eta$ is small enough, we can bound the Bregman divergence by $\eta^2 \|\hat y_t\|_t^2$, which completes the proof.
\end{proofsketch}

The same regret bound holds for the \scrible\space algorithm. Crucially, this theorem shows that the regret is controlled by two quantities: the self-concordance parameter $\vartheta$ and the sum of the variances of loss vector estimators $\hat y_t$ measured in the local norms $\|\cdot\|_t$.
Thus, improving regret reduces to designing sampling and estimation schemes that produce small $\E[\|\hat y_t\|_t^2]$.

\paragraph{Motivations for FTPL Sampling}

In \scrible, the variance of the estimator in the local norm is bounded by $d^2$, which leads to the suboptimal regret $\O(d\sqrt{\vartheta n\ln n})$. The goal of \SCFTPL\space is to reduce this variance through a different sampling and estimation scheme. More generally, the variance of the loss estimator reflects how effectively the sampling strategy explores the action set by spreading actions across directions. Without stochastic exploration, accurate estimation of the loss vector is impossible and sublinear regret cannot be achieved \citep{cesabianchilugosi}. Hence, an effective sampling scheme must exploit the available geometry of $K$ around the current point $x_t$.

The Dikin ellipsoid sampling used by \scrible\space does not do so optimally, an intuition provided by \citet{abernethy2009beating}.
When $x_t$ lies near the boundary of $K$, the Dikin ellipsoid $W(x_t)$ shrinks in order to remain centered at $x_t$ and contained in $K$. In contrast, \FTPL\space selects actions by solving a linear optimization problem over $K$, resulting in actions at extreme points of the set. This boundary-oriented sampling allows \FTPL\space sampling to exploit the full space in $K$ and, as we show later for the hypercube, can lead to lower variance of the estimator and sharper regret bounds.

\paragraph{SC-FTPL Estimation Scheme}
Let us now turn to the estimation scheme. We construct an unbiased estimator $\hat{y}_t$ for $y_t \in \mathbb{R}^d$ using only the observed bandit feedback $\langle y_t, a_t \rangle$, where $a_t \sim S(x_t)$. Following \citet{bubeck2012towards}, a natural candidate is
\begin{equation}
    \hat y_t = Q_t^{-1} a_t \langle a_t, y_t \rangle, \quad\text{with}\quad Q_t = \E_{t-1}[a_t a_t^\top],
\end{equation}
which is unbiased by construction. Its validity requires $Q_t$ to be invertible, which does not hold for general self-concordant perturbations. A counterexample is given in Appendix~\ref{appendix:non_singularity_of_Q}. To address this, the next proposition gives a sufficient condition ensuring $Q_t$ is positive definite.

\begin{proposition}\label{prop:full_support_implies_Q_invertible}
Let $K\subset \R^d$ be a convex body and $D$ be an absolutely continuous probability distribution on $\R^d$. Suppose that $\operatorname{supp}(D)=\R^d$. Then, for all $\theta\in\R^d$, the matrix 
\[Q=\E_{\xi\sim D}[\nabla\phi_K(\theta+\xi)\nabla\phi_K(\theta+\xi)^\top]\]
is positive definite.
\end{proposition}

\begin{proofsketch}
The matrix $Q$ is positive semi-definite by construction. To establish non-singularity, we show $u^\top Q u > 0$ for any $u \neq 0$. Because $D$ has full support and is absolutely continuous, this is equivalent to showing that the set 
$Z^c \coloneq \{y \in \mathbb{R}^d : \nabla \phi_K(y) \notin u^\perp\}$ has non-zero Lebesgue measure.
Geometrically, $\nabla \phi_K(y)$ is the element of $K$ that maximizes $\langle \cdot, y \rangle$. 
We construct an open convex cone $C$ such that for any $y \in C$, $\nabla \phi_K(y)$ cannot lie in the subspace $u^\perp$, because its projection onto $u$ dominates its projection onto $u^\perp$. As $C \subset Z^c$ is open and non-empty, $Z^c$ has non-zero measure, concluding the proof.
For all missing details, see Appendix~\ref{appendix:non_singularity_of_Q}.
\end{proofsketch}

In the next section, we will show that for the hypercube and the $\ell_2$ ball, one can explicitly choose self-concordant perturbations with full support, which ensures $Q_t$ remains non-singular.

\section{Analysis of SC-FTPL for Two Specific Action Sets}\label{section:specific_action_sets}

In this last section, we study \SCFTPL\space on two particular action sets: the hypercube and the unit $\ell_2$ ball. For each of them, we construct a self-concordant perturbation and detail the steps that lead to the regret bound established in Theorem~\ref{thm:regret_scftpl_hypercube} and Theorem~\ref{thm:regret_scftpl_ball}. Finally, we provide a time complexity analysis of \SCFTPL\space in both of these cases.

\subsection{Hypercube Case}

Let $K=[-1,1]^d$. We consider the \emph{entropic barrier} of \cite{bubeck_entropic}, which is defined for all convex bodies as the Fenchel conjugate of the function
\begin{equation}
    \calr^*(\theta)\coloneq \ln \int_K \exp\sca{\theta}{x}\,dx\,.
\end{equation}
We know that for all convex bodies, the entropic barrier is a $d$-self-concordant barrier \citep{chewi}. 
For the hypercube, the self-concordance parameter $\vartheta=d$ is optimal, as there does not exist any $\vartheta$-self-concordant barrier with $\vartheta < d$ \citep[Proposition 2.3.6]{nesterov1994interior}.

For the hypercube, while the entropic barrier $\calr$ has no closed-form expression, its conjugate $\calr^*$ and its conjugate gradient are
\begin{equation*}
\calr^*(\theta)=\sum_{i=1}^d \ln \frac{2\sinh \theta_i}{\theta_i}
,\quad\text\quad
\nabla\calr^*(\theta)=[L(\theta_i)]_{i=1}^d
\end{equation*}
for all $\theta\in\R^d$, where $L:t\mapsto \coth(t)-1/t$.
\begin{proof}
The exponential of the entropic barrier's conjugate is
\begin{equation*}
\exp\calr^*(\theta) 
= \int_{[-1,1]^d}e^{\sca{\theta}{x}}dx
= \prod_{i=1}^d \int_{-1}^1 e^{\theta_ix_i} dx_i
= \prod_{i=1}^d \frac{e^{\theta_i}-e^{-\theta_i}}{\theta_i}\,.
\end{equation*}
Taking the logarithm yields the stated expression. For the gradient, notice that $\calr^*$ is additively separable and that the derivative of $t\mapsto \ln(2\sinh (t)/t)=\ln 2 + \ln\sinh (t)-\ln t$ is $L$.
\end{proof}
Hence, $\nabla\calr^*$ is component-wise separable.
Similarly, the gradient of the support function is $\nabla\phi_K:\theta \mapsto[\sgn(\theta_i)]_{i=1}^d$. This separability allows us to construct easily a perturbation that replicates $\calr$ on $[-1,1]^d$.

\begin{proposition}\label{prop:scperturbation_hypercube_existence}
There exists a probability distribution over $\R^d$ whose marginal distributions are independent and have probability density function
\begin{equation*}
    f:t\in\R\mapsto \frac{1}{2t^2}-\frac{1}{2\sinh^2(t)}\,.
\end{equation*}
This is a $d$-self-concordant perturbation for $[-1,1]^d$.
\end{proposition}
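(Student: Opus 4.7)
The plan is to establish the two assertions of the proposition in turn: first that $f$ defines a valid probability density on $\R$, and second that the product measure $\cald$ of $d$ independent copies of $f$ replicates the barrier $\calr(x)=-\sum_i\ln(1-x_i^2)$, which is already known to be $d$-self-concordant on $[-1,1]^d$.

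For the first assertion, I observe that $f$ is nonnegative on $\R\setminus\{0\}$, that a Taylor expansion of $\sqrt{1+t^2}-1$ around $0$ gives $f(t)\to 1/4$ as $t\to0$, and that $f(t)\sim 1/(2t^2)$ at infinity, so $f$ is locally and globally integrable. To compute $\int_\R f=1$, I would guess and verify the antiderivative
\[ F(t)\coloneq \frac{\sqrt{1+t^2}-1}{2t}, \qquad F(0)\coloneq 0, \]
by a direct differentiation showing $F'=f$ on $\R\setminus\{0\}$. Since $F$ is odd and continuous with limits $\pm 1/2$ at $\pm\infty$, this yields $\int_\R f(t)\,dt=F(+\infty)-F(-\infty)=1$.

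For the second assertion, I exploit the coordinate-wise separability of the hypercube. Both target maps act component-wise: $\nabla\calr^*(\theta)_i$ depends only on $\theta_i$ by the displayed formula, and $\nabla\phi_{[-1,1]^d}(\theta)_i=\sgn(\theta_i)$ on $\theta_i\neq 0$. Since $\cald$ has independent marginals, the desired identity $\nabla\calr^*(\theta)=\E_{\xi\sim\cald}[\nabla\phi_{[-1,1]^d}(\theta+\xi)]$ decouples, and it suffices to prove the one-dimensional identity
\[ \E_{\xi\sim f}[\sgn(\theta+\xi)]=\frac{\sqrt{1+\theta^2}-1}{\theta}, \qquad \theta\in\R\setminus\{0\}, \]
(with $\theta=0$ handled by continuity, the right-hand side tending to $0$ and the expectation vanishing by symmetry of $f$). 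I would write the left-hand side as $1-2F_\xi(-\theta)$, where $F_\xi(t)=\tfrac12+F(t)$ is the CDF of $\xi$ (using the antiderivative above together with $F_\xi(-\infty)=0$). Substituting and using the oddness of $F$ turns the expression into $-2F(-\theta)=2F(\theta)=(\sqrt{1+\theta^2}-1)/\theta$, which is exactly the claimed identity.

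The main obstacle is conceptual rather than computational: it is recognising that the density $f$ is essentially forced by the relation $2f=g'$ with $g(\theta)=(\sqrt{1+\theta^2}-1)/\theta$, which is what one obtains by differentiating the target CDF identity. Once this guiding principle is in hand, both steps reduce to routine verifications; the self-concordance of $\calr$ itself (with parameter $d$, obtained by a Cauchy--Schwarz argument on the separable sum) is invoked from standard references without re-derivation.
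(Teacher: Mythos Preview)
Your proposal is correct and follows essentially the same route as the paper: both verify that $f$ is a valid density via the antiderivative $F(t)=(\sqrt{1+t^2}-1)/(2t)$ (the paper simply asserts $\int_\R f=1$, while you spell out the limits), and both establish the replication identity coordinate-wise by writing $\E[\sgn(\theta_i+\xi_i)]=1-2F_\xi(-\theta_i)$ with $F_\xi=\tfrac12+F$ and simplifying. Your additional remarks on integrability near $0$ and $\infty$ and on the heuristic $2f=g'$ are welcome but not structurally different.
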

The proof of this proposition is quite straightforward and is deferred to Appendix~\ref{appendix:scperturbation_hypercube_existence}. Now that we have exhibited a self-concordant perturbation for $[-1,1]^d$, we upper-bound the variance of the local norm of the estimators of \SCFTPL, in order to obtain an upper bound on the regret.

\begin{proposition}\label{prop:scperturbation_hypercube_bound}
Let $\cald$ be the self-concordant perturbation defined in Proposition~\ref{prop:scperturbation_hypercube_existence} and $\calr$ be the entropic barrier on the hypercube. Consider the \SCFTPL\space algorithm run with perturbation $\cald$ and any learning rate $\eta>0$. Then, for all $t\in[n]$, the estimator $\hat y_t$ satisfies
\begin{equation*}
    \E_{t-1}[\|\hat y_t\|^2_t] \le \frac{1}{3}d
    \qquad\text{and}\qquad
    \|\hat y_t\|^2_t\le 3\,d\;\;\text{a.s.,}
\end{equation*}
where $\|\cdot\|_t=\|\cdot\|_{\nabla^2\calr^*(-\eta\hat Y_{t-1})}$.
\end{proposition}
The proof is mainly computational and is provided in Appendix~\ref{appendix:scperturbation_hypercube_bound}. Crucially, the bound of the variance of the local norm of the estimator improves by a $3d$ factor over the \scrible's $d^2$ bound. This confirms that, on the hypercube, \SCFTPL\space achieves more accurate loss vector estimation.
Now that we bounded the variance of the local norm of the estimator, we can establish Theorem~\ref{thm:regret_scftpl_hypercube}.

\paragraph{Proof sketch of Theorem \ref{thm:regret_scftpl_hypercube}.}
The second inequality of Proposition~\ref{prop:scperturbation_hypercube_bound} and the assumption on $n$ and $d$ ensure $\eta\|\hat y_t\|^2 \le 1/2$ almost surely. Theorem~\ref{thm:self-conc-linear-bandits} then applies, yielding a regret bound in terms of $\vartheta=d$ and $\E[\|\hat y_t\|_t^2]\le d/3$. Combining these and optimizing $\eta$ gives the stated regret bound. Full details are in Appendix~\ref{appendix:regret_scftpl_hypercube}.
\hfill$\blacksquare$

This theorem bounds the regret of \SCFTPL\space on the hypercube in $\O(d\sqrt{n \ln n})$ for an optimal choice of $\eta$. This bound improves over \scrible\space by a factor of $\sqrt{d}$ and matches the lower bound established by \cite{dani_pricebanditinfo} up to logarithmic factors.
Finally, we provide a complexity analysis of \SCFTPL\space on the hypercube in Appendix \ref{appendix:complexity_analysis_hypercube}, and prove that it has a per-round complexity in $\O(d)$. This matches the complexity of \scrible\space and \textsc{Osmd} in the same setting.

\subsection{\texorpdfstring{$\ell_2$}{l2} Ball Case}

Let us now consider the case where the action set is $\B^d$ the unit $\ell_2$ ball. We consider the \emph{log-barrier} on the ball given by
\[\calr (x)\coloneq -\ln (1-\|x\|_2^2)\,,\quad x\in\inter\B^d\,. \]
This is a 1-self-concordant barrier on $\B^d$. Here, both $\calr$ and $\phi_{\B^d}$ are spherically invariant, i.e., depend only on the norm of their argument. Exploiting this symmetry, we construct an explicit perturbation replicating $\calr$ on $\B^d$:

\begin{proposition}\label{prop:scperturbation_ball_existence}
Let $\mathbf T$ be a random vector in $\R^d$ following a multivariate $t$-distribution with location 0, scale matrix $I_d$ and $d+1$ degrees of freedom. Let $U$ be a random variable following a uniform distribution on $[0,1]$, independent from $\mathbf T$. Then, the distribution of
\[\xi = \frac{\mathbf T}{\sqrt{d+1}\ U}\]
is a 1-self-concordant perturbation for $\B^d$.
\end{proposition}
While the computations leading to the proof of Proposition~\ref{prop:scperturbation_ball_existence} reduce to a one-dimensional radial calculation, identifying the perturbation that exactly reproduces the barrier geometry is non trivial and, to the best of our knowledge, novel. These computations are detailed in Appendix~\ref{appendix:ball}. Now that we have exhibited a self-concordant perturbation for $\B^d$, we derive bounds on the estimators of \SCFTPL\space for this choice of perturbation.

\begin{proposition}\label{prop:scperturbation_ball_bound}
Let $\cald$ be the $1$-self-concordant perturbation defined in Proposition~\ref{prop:scperturbation_ball_existence} and $\calr$ be the log-barrier on $\B^d$. Consider Algorithm~\ref{alg:scftpl} run with perturbation distribution $\cald$ and any learning rate $\eta>0$. Then, for all $t\in[n]$, the estimator $\hat y_t$ satisfies the following inequalities
\begin{enumerate}
    \item $\E_{t-1}[||\hat y_t||_t^2]\le \frac{5}{4}d^2$,
    \item $||\hat y_t||_t^2 \le d^2\eta \,\|\hat Y_{t-1}\|_2 + 4\,d^2$ almost surely, and
    \item $\E_{t-1}[||\hat y_t||^2_2]\le d^2\eta\, \|\hat Y_{t-1}\|_2 + 2\,d^2$.
\end{enumerate}
where $\|\cdot\|_t=\|\cdot\|_{\nabla^2\calr^*(-\eta\hat Y_{t-1})}$.
\end{proposition}

\begin{proofsketch}
Fix $t\in[n]$ and let $\theta=-\eta\hat Y_{t-1}$. By the spherical symmetry of $\cald$, $Q_t$ is invariant under rotations that fix $\theta$, which implies that it admits the decomposition $Q_t = q\,P_\theta + q_\perp\,P_{\theta^\perp}$,
where $P_\theta$ and $P_{\theta^\perp}$ are the orthogonal projection matrices onto $\theta$ and $\theta^\perp$, respectively. We then show, using properties of the multivariate $t$-distribution, that both coefficients are positive and satisfy
$q \ge 1/d$ and $ q_\perp \ge 1/(d(\|\theta\|+1))$.
Thus $Q_t^{-1}=q^{-1}P_\theta+q_\perp^{-1}P_{\theta^\perp}$.

Substituting $Q_t^{-1}$ and $\nabla^2\calr^*(\theta)$ into the expressions of
$\E_{t-1}[\|\hat y_t\|_t^2]$, $\|\hat y_t\|_t^2$, and $\E_{t-1}[\|\hat y_t\|_2^2]$, each quantity reduces to a one–dimensional inequality in $\|\theta\|$. Real analysis computations then yields the three stated bounds.
The full proof, including the derivation of the bounds on $q$ and $q_\perp$, is deferred to Appendix~\ref{appendix:scperturbation_ball_bound}.
\end{proofsketch}

From these bounds, we derive Theorem~\ref{thm:regret_scftpl_ball}, which bounds the regret of \SCFTPL\space on the $\ell_2$ ball.

\paragraph{Proof sketch of Theorem \ref{thm:regret_scftpl_ball}.}
The main difficulty is that, unlike the hypercube case, there is no uniform almost-sure bound on $\|\hat y_t\|_t$ independent of $\hat Y_{t-1}$. As a result, the regret decomposition of Theorem~\ref{thm:self-conc-linear-bandits}, which requires $\eta\|\hat y_t\|_t\le 1/2$ for all $t$, cannot be applied directly.

To address this issue, we work on the high-probability event
\begin{equation*}
    E\coloneq\{\forall t\in[n],\, \eta\|\hat y_t\|_t\le 1/2\}\,.
\end{equation*}
Using the almost-sure bound of Proposition~\ref{prop:scperturbation_ball_bound}, we show that $E$ holds whenever $\sup_{t<n}\|\hat Y_t\|^2$ remains below a threshold $1/C_n$, where $C_n=o(n^{-5/2})$ is an explicit quantity arising from the analysis and for which the resulting failure probability is negligible compared to the main regret term.

To control the growth of $\|\hat Y_t\|^2$, we use the $\ell_2$-norm variance bound of Proposition~\ref{prop:scperturbation_ball_bound} to construct an appropriate supermartingale. Applying Ville’s inequality yields a tail bound of order $C_n n^2$ on the probability that $\sup_{t<n}\|\hat Y_t\|^2>1/C_n$.

We then define the stopping time
\begin{equation*}
    \tau\coloneq\inf\{t<n:\|\hat Y_t\|^2>1/C_n\}\wedge n\,,
\end{equation*}
and decompose the regret at time $\tau$. On the event $\{\tau=n\}$, the condition $\eta\|\hat y_t\|_t\le 1/2$ holds throughout, and an optional stopping argument allows us to apply the same regret analysis as in Theorem~\ref{thm:self-conc-linear-bandits}. Using $\vartheta=1$ and the local norm variance bound from Proposition~\ref{prop:scperturbation_ball_bound}, this yields the leading term $d\sqrt{5n\ln n}+2$ for an optimal value of the learning rate $\eta$.

On the complementary event $\{\tau<n\}$, the regret is trivially bounded by $2n$, which multiplied by the failure probability contributes a lower-order term $\O(C_n n^3)=o(\sqrt{n})$. Combining both parts concludes the proof. Full details are given in Appendix~\ref{appendix:regret_scftpl_ball}.
\hfill$\blacksquare$

Compared to the hypercube setting, our results on the $\ell_2$ ball are weaker in two aspects. First, we were unable to establish an $\O(d)$ bound on the variance of the local norm of $\hat y_t$. Our analysis yields a quadratic dependence on $d$, matching the \scrible\space estimator, which leads to the $\O(d\sqrt{n \ln n})$ regret in Theorem~\ref{thm:regret_scftpl_ball}. Second, there is no uniform almost-sure bound on $\|\hat y_t\|_t$ independent of $\hat Y_{t-1}$. This justify the need for the Ville's inequality and optional stopping time arguments we developed in the proof of Theorem~\ref{thm:regret_scftpl_ball}. These arguments introduce an additional term in the regret bound.
While this additional term is asymptotically negligible when compared to the main $\O(d\sqrt{n\ln n})$ term, it may still be important for small values of $d$.

We include a discussion of the computational cost of \SCFTPL\space on the $\ell_2$-ball in Appendix~\ref{appendix:complexity_analysis_ball}. We show that it has a per-round complexity in $\mathcal{O}(d)$: both sampling the perturbation $\xi$ as defined in Proposition~\ref{prop:scperturbation_ball_existence} and computing $a_t$ are $\mathcal O(d)$. Then, computing the loss estimator $\hat y_t$ can be done in $\mathcal O(d)$ using numerical integration and leveraging the structure of $Q_t^{-1}$. This per-round computational cost matches the complexity of \scrible\space and of \cite{hoeven2018many}'s algorithm on the $\ell_2$-ball.

\section{Future Directions}\label{section:conclusion}

Our work open up to several unexplored research directions. First, we have shown that the regret of \SCFTPL\space on the $\ell_2$ ball is bounded in $\O(d\sqrt{n \ln n})$ for our choice of self-concordant perturbation. Whether this rate is tight for \FTPL\space introduced by any perturbation scheme remains an open question.

Second, we have not established the existence of self-concordant perturbations for general convex bodies. It remains unclear whether such perturbations exist beyond specific domains. A promising direction is to investigate the entropic barrier, a universal self-concordant barrier defined for any convex body \citep{bubeck_entropic}, and determine whether it systematically admits a corresponding perturbation distribution.

Finally, \citet{HazanLevy} showed how self-concordant regularization can be used beyond linear bandits, when the loss functions are convex. It would be natural to explore whether the perturbation-based perspective developed here can yield similar benefits in that setting.

\acks{L.\@ L{\'e}vy, J.-L.\@ Valeau, A.\@ Akhavan, and P.\@ Rebeschini were funded by UK Research and Innovation (UKRI) under the UK government’s Horizon Europe funding guarantee (grant number EP/Y028333/1).}

\bibliography{refs}

\appendix

\newpage
\section{Notation}

In this section, we collect the main pieces of notation used in this paper.

\paragraph{Convex Analysis Notation}
Let $K\subset \R^d$ be a convex body, i.e., a non-empty compact convex set. We denote $\phi_K:\theta\mapsto\max_{x\in K}\sca{x}{\theta}$ its \emph{support function} and for all $x\in\R^d$, $\pi_x:y\in\R^d\mapsto \inf\{t>0:x+t^{-1}(y-x)\in K\}$ its \emph{Minkowski function in $x$}.

Let $f$ be a closed, proper convex function on $\R^d$. Its \emph{Fenchel conjugate} is \(f^*:\theta\in\R^d\mapsto \sup_{x\in\R^d}\sca{x}{\theta}-f(x)\).
We refer to \citet{rockafellar} for an extensive treatment of support functions and Fenchel duality.
If $f$ is differentiable, we define the \emph{Bregman divergence with respect to $f$} for $x,y\in\dom f$ as $B_f(x,y)=f(x)-f(y)-\sca{\nabla f(y)}{x-y}$.

\paragraph{Bandit Notation}
We denote $n\in\N$ the horizon and $K$ the action set. For all round $t\in[n]\coloneq\{1,\ldots,n\}$, the loss vector is denoted $y_t$ and the learner's action $a_t$.

Let $(\mathcal F_t)_{0\le t\le n}$ be the natural filtration defined by $\mathcal F_t= \sigma(a_s,s\le t)$, and let $\E_t[\cdot] = \E[\cdot|\mathcal F_t]$ be the conditional expectation given the first $t$ rounds.

For \SCFTPL, $\cald$ denotes the $\vartheta$-self-concordant perturbation and $\eta$ the learning rate. At round $t$, $\xi_t$ is the perturbation sampled from $\cald$, $Q_t=\E_{t-1}[a_ta_t^\top]$, $\hat y_t=Q_t^{-1}a_t\sca{y_t}{a_t}$ is the loss estimator, and $\hat Y_t=\sum_{s=1}^t \hat y_s$ is the cumulated estimate.

Finally, $\calr$ denotes the self-concordant barrier replicated by $\cald$, and 
\[\|\cdot\|_t=\|\cdot\|_{\nabla^2\calr^*(-\eta \hat Y_{t-1})}\]
is the local norm induced by the Hessian of $\calr^*$ in $-\eta\hat Y_{t-1}$.

\section{Heavy-Tailed Nature of Self-Concordant Perturbations}\label{appendix:heavy_tailed}

In this section, we highlight an interesting property of self-concordant perturbations that is not needed for the main proofs but provides valuable intuition. Specifically, these perturbations are heavy-tailed, which helps explain why they naturally induce strong exploration in bandit algorithms. Understanding this behavior sheds light on the role of the perturbation in \SCFTPL\space and related algorithms.

\begin{proposition}\label{prop:heavy_tail}
    Let $K\subset \R^d$ be a convex body and $\cald$ be a self-concordant perturbation for $K$. Then $\cald$ does not have a finite first moment.
\end{proposition}
The proof of Proposition \ref{prop:heavy_tail} makes use of the following lemma from \citet{Bertsekas} that justifies exchanging the gradient and expectation operators.

\begin{lemma}[\citealp{Bertsekas}]\label{lemma:bertsekas}
    Let $h:\R^d\to \R$ be a convex function and $\xi$ be a random vector on $\R^d$, whose distribution is absolutely continuous w.r.t the Lebesgue measure. Assume that for all $x\in\R^d$, $\E|h(x-\xi)|<\infty$ and define $H(x)=\E[h(x-\xi)]$. Then $h$ is differentiable almost everywhere, $H$ is differentiable everywhere and for all $x\in\R^d$,
    \begin{equation*}
        \nabla H(x)=\E[\nabla h(x-\xi)]\,.
    \end{equation*}
\end{lemma}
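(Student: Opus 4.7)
The plan is to leverage two pillars of convex analysis: almost-everywhere differentiability of convex functions, and the monotonicity of their difference quotients. First I would dispose of the a.e.\ differentiability of $h$: since $h$ is convex on $\R^d$, it is locally Lipschitz, so by Rademacher's theorem (or directly by a classical result of Alexandrov/Reshetnyak for convex functions) its set of non-differentiability points has Lebesgue measure zero. Because $\xi$ is absolutely continuous with respect to Lebesgue, this set is also null under the law of $x-\xi$ for any fixed $x$, so $\nabla h(x-\xi)$ is well-defined almost surely. Moreover, $H$ is convex as an expectation of convex functions, so it suffices to show that its one-sided directional derivative is linear in the direction and equals $\E[\nabla h(x-\xi)]\cdot d$; a convex function whose directional derivative is linear at a point is differentiable there, with that linear form as its gradient.

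To compute the directional derivative of $H$, I would fix $x, d\in\R^d$ and consider, for $t>0$,
\begin{equation*}
    \frac{H(x+td)-H(x)}{t}=\E\bigg[\frac{h(x+td-\xi)-h(x-\xi)}{t}\bigg].
\end{equation*}
The integrand converges almost surely, as $t\downarrow 0$, to the one-sided directional derivative $h'(x-\xi;d)$, which equals $\nabla h(x-\xi)\cdot d$ on the full-measure set where $h$ is differentiable. The key point is that by convexity of $h$, the map $t\mapsto (h(x+td-\xi)-h(x-\xi))/t$ is non-decreasing in $t>0$, hence dominated by its value at $t=1$, namely $h(x+d-\xi)-h(x-\xi)$, which is integrable by the assumption $\E|h(\cdot-\xi)|<\infty$. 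A symmetric bound from below using the direction $-d$ gives an integrable minorant. Dominated convergence (or monotone convergence from above, applied separately to the positive and negative parts) then yields
\begin{equation*}
    \lim_{t\downarrow 0}\frac{H(x+td)-H(x)}{t}=\E[\nabla h(x-\xi)\cdot d]=\E[\nabla h(x-\xi)]\cdot d.
\end{equation*}

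Since this right-hand side is linear in $d$, the convex function $H$ is differentiable at $x$ with $\nabla H(x)=\E[\nabla h(x-\xi)]$, and this holds at every $x\in\R^d$.

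The main obstacle I anticipate is justifying the exchange of limit and expectation in a way that is uniform over the direction $d$; the trick of using the monotonicity of convex difference quotients to produce an integrable dominant is what makes the argument work cleanly, and it is worth verifying carefully that the integrability of $h(x+d-\xi)-h(x-\xi)$ and of $h(x-\xi)-h(x-d-\xi)$ follows from the blanket hypothesis $\E|h(y-\xi)|<\infty$ applied at the three points $y\in\{x,x+d,x-d\}$. The remaining work—checking measurability of $\xi\mapsto\nabla h(x-\xi)$ (which follows from $\nabla h$ being a Borel map on its domain of definition) and confirming that the exceptional null set can be absorbed—is routine.
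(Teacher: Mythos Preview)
The paper does not prove this lemma; it is stated as a citation from \citet{Bertsekas} and used as a black box in the proof of Proposition~\ref{prop:heavy_tail}. There is therefore no ``paper's own proof'' to compare against.

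That said, your argument is correct and is essentially the standard proof of this result. The two key ingredients---Rademacher's theorem for the a.e.\ differentiability of $h$, and the monotonicity of convex difference quotients to produce an integrable envelope---are exactly what one needs. Your dominating bounds are right: for $0<t\leq 1$ the three-point slope inequality for the convex function $s\mapsto h(x+sd-\xi)$ gives
\[
h(x-\xi)-h(x-d-\xi)\;\leq\;\frac{h(x+td-\xi)-h(x-\xi)}{t}\;\leq\;h(x+d-\xi)-h(x-\xi),
\]
and both bounds are integrable by the hypothesis applied at $y\in\{x-d,x,x+d\}$, just as you note. The passage from ``directional derivative linear in $d$'' to ``differentiable with gradient $\E[\nabla h(x-\xi)]$'' is also correct for a convex function on $\R^d$: linearity of $d\mapsto H'(x;d)$ means the subdifferential at $x$ is a singleton, which for finite-dimensional convex functions is equivalent to (Fr\'echet) differentiability. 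The measurability concern you flag is indeed routine, since $\nabla h$ is the a.e.\ limit of Borel-measurable difference quotients.
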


\begin{proof}\textbf{of Proposition \ref{prop:heavy_tail}.}
We proceed by contradiction. Assume that $\E\|\xi\|<+\infty$ with $\xi\sim\cald$. For all $\theta\in\R^d$,
\begin{equation*}
    \E|\phi_K(\theta+\xi)|\le \big(\|\theta\|+\E\|\xi\|\big)\sup_{x\in K}\|x\|<+\infty\,.
\end{equation*}
From Lemma \ref{lemma:bertsekas}, it follows that for all $\theta\in\R^d$,
\begin{equation*}\nabla\E[\phi_K(\theta+\xi)]=\E[\nabla\phi_K(\theta+\xi)]=\nabla\calr^*(\theta)\,.
\end{equation*}
So there exists $C\in\R$ such that
\begin{equation*}
    \calr^*(\theta)=\E[\phi_K(\theta+\xi)]+C\quad\text{for all $\theta\in\R^d$.}
\end{equation*}
The support function $\phi_K$ is $L$-Lipschitz with $L=\sup_{x\in K}\|x\|$, so for all $\theta,\xi\in\R^d$,
\begin{equation*}
    \phi_K(\theta+\xi)-\phi_K(\theta)\ge -L\|\xi\|\,.
\end{equation*}
Taking the expectation over $\xi\sim \cald$ yields
\begin{equation}\label{eq:13}
    \calr^*(\theta)-\phi_K(\theta)\ge -L\,\E\|\xi\|+C\,,\quad\forall\theta\in\R^d\,.
\end{equation}
For all $x\in K$, we have
\setlength{\jot}{3pt}
\begin{align*}
    \calr^*(\nabla\calr(x))&=\sca{\nabla\calr(x)}{x}-\calr(x)\\
    &\le \phi_K(\nabla\calr(x))-\calr(x)\,.
\end{align*}
Hence $(\calr^*-\phi_K)(\nabla\calr(x))$ goes to $-\infty$ when $\calr(x)$ goes to $+\infty$.
Because $\calr$ is unbounded from above on $K$, this directly contradicts (\ref{eq:13}). So $\cald$ does not have a finite first moment.
\end{proof}
This property highlights an important distinction between defining self-concordant perturbations at the level of gradients versus potentials. In \cite{abernethy14}, the \emph{replication} property between a perturbation distribution $\cald$ and a regularizer $\psi$ is defined at the potential level, i.e.,
\begin{equation}\label{eq:replication_potential}
    \psi^*(\theta)= \E_{\xi\sim\cald}[\phi_K(\theta+\xi)],\quad \theta\in\R^d.
\end{equation}
By exchanging the gradient and expectation, they then derive the equality of the action in expectation,
\begin{equation}\label{eq:replication_gradient}
    \nabla\psi^*(\theta)= \E_{\xi\sim\cald}[\nabla\phi_K(\theta+\xi)],\quad \theta\in\R^d.
\end{equation}
This second equality can be called \emph{replication property at the gradient level}. In Definition \ref{def:scp} however, we define self-concordant perturbations using the replication at the gradient level. Proposition \ref{prop:heavy_tail} explains this: since $\cald$ lacks a finite first moment, the expectation $\E[\phi_K(\theta+\xi)]$ may fail to exist. However, because $K$ is bounded, the expectation $\E[\nabla\phi_K(\theta+\xi)]$ is always well-defined.

\section{Proof of Self-Concordance Results}

\subsection{Proof of Equation (\ref{eq:local_sc})}\label{appendix:proof_lemma_local_sc}

\begin{proof}
From \cite[2.2]{nemirovskii}, we know that if $\calr$ is a self-concordant function, then so is $\calr^*$ and that for all $x\in\inter K$, $y\in W(x)$, it holds that
\begin{equation}\label{eq:nemirovskii_221}
    \calr(x+y) \le \calr(x)+\nabla\calr(x)^\top y+\rho(\|x-y\|_x)
\end{equation}
where $\rho(t)=-\ln(1-t)-t$. Because it is also a self-concordant function, (\ref{eq:nemirovskii_221}) holds for $\calr^*$, and we have that if $\|y-x\|_{\nabla^2\calr^*(x)}<1$, then
$$\calr^*(x+y) \le \calr^*(x)+\nabla\calr^*(x)^\top y+\rho(\|x-y\|_{\nabla^2\calr^*(x)}).$$
Thus, all that remains is to prove that $\rho(t)\le t^2$ for $t\le 1/2$. Let $f(t)=t^2-\rho(t)$. Then, $f'(t)=2t-\frac{1}{1-t}+1=\frac{t(2t-1)}{t-1}$. For all $t\le 1/2$, we have that $f'(t)\ge 0$, hence $f(t)\ge f(0)=0$. This proves that that $\rho(t)\le t^2$ for all $t\le 1/2$, which concludes the proof.
\end{proof}

\subsection{Proof of Theorem \ref{thm:self-conc-linear-bandits}}\label{appendix:proof_self-conc-linear-bandits}

The proof of Theorem \ref{thm:self-conc-linear-bandits} makes use of the following lemma, which bounds the regret of \BanditsGBPA\space when the sampling and estimation schemes are unbiased, following a standard regret decomposition in the bandit literature. For instance, a similar, though not identical, statement can be found in \citet[Section 5]{bubeck_cesabianchi_12}.

\begin{lemma}\label{lemma:bandits_gbpa}
    Let $K$ be a convex set. Let $\Phi:\R^d\to\R$ be convex, differentiable and such that $\Ima\nabla\Phi\subseteq K$.
    Then, Algorithm \ref{alg:bandit-gbpa} with potential $\Phi$ and unbiased sampling and estimation schemes satisfies, for all $u\in K$,
    \begin{equation*}
        R_n(u)\le \Phi^*(u)-\min_{K}\Phi^*+\sum_{t=1}^n \E\big[B_\Phi(-\hat Y_t, -\hat Y_{t-1})\big]\,.
    \end{equation*}
\end{lemma}

\begin{proof}\textbf{of Lemma \ref{lemma:bandits_gbpa}.}
The first step of the proof is to notice that, because of the unbiasedness of sampling and estimation schemes, it holds that for all $t\in[n]$,
\begin{equation*}
    \E_{t-1}[\sca{y_t}{a_t-u}]=\sca{y_t}{x_t-u} = \E_{t-1}[\sca{\hat y_t}{x_t-u}]\,.
\end{equation*}
It follows that the regret can be written as
\begin{align*}
R_n(u)&=\E\Big[\sum_{t=1}^n \sca{\hat y_t}{x_t-u}\Big]\\
&=\sum_{t=1}^n \E\big[\sca{\hat Y_{t}-\hat Y_{t-1}}{\nabla\Phi(-\hat Y_{t-1})}\big]-\sum_{t=1}^n\E\big[\sca{\hat y_t}{u}\big]\\
&=\sum_{t=1}^n\E\big[ -\Phi(-\hat Y_t)+\Phi(-\hat Y_{t-1})+B_\Phi(-\hat Y_t,-\hat Y_{t-1})\big]-\E\big[\sca{\hat Y_n}{u}\big]\,,
\end{align*}
where the second equality follows from the definition of $x_t$ and the third equality follows from the definition of the Bregman divergence. By telescoping the $\Phi(-\hat Y_t)$ terms, we obtain
\begin{equation}\label{eq:proof_lemma6_1}
R_n(u)=\E\big[-\Phi(-\hat Y_n)+\Phi(-\hat Y_0)\big] + \sum_{t=1}^n \E\big[B_\Phi(-\hat Y_t, -\hat Y_{t-1})\big]-\E\big[\sca{\hat Y_n}{u}\big]\,.
\end{equation}
Because $\Phi$ is proper, closed, and convex, we have $\Phi=\Phi^{**}$. Therefore,
\[\Phi(-\hat Y_0)=\Phi(0)=\sup_{x \in K}(-\Phi^*(x))=\min_{x \in K}\Phi^*(x)\,,\]
and
\[\Phi(-\hat Y_n)=\sup_{a\in K}\big\{\sca{a}{-\hat Y_n}-\Phi^*(a)\big\}\ge \sca{u}{-\hat Y_n}-\Phi^*(u)\,.\]
It follows that
\begin{equation*}
    -\Phi(-\hat Y_n)-\sca{u}{\hat Y_n}\le \Phi^*(u)\quad\text{almost surely,}
\end{equation*}
Combining this last inequality into (\ref{eq:proof_lemma6_1}) concludes the proof.
\end{proof}

\begin{proof}\textbf{of Theorem \ref{thm:self-conc-linear-bandits}.}
We first prove that Self-Concordant FTPL is an unbiased instance of Bandits-GBPA. Its potential is given by $\Phi=(\frac{1}{\eta}\calr)^*$, where $\calr$ denotes the self-concordant barrier replicated by $\cald$. By properties of the Fenchel conjugate, we have
\begin{equation*}
    \Phi(\theta)=\frac{1}{\eta}\calr^*(\eta\,\theta)\quad\text{and}\quad
    \Phi^*(x)=\frac{1}{\eta}\calr(x)
\end{equation*}
Hence, for all $t\in[d]$,
\setlength{\jot}{3pt}
\begin{align*}
    \E_{t-1}[a_t]
    &=\E_{\xi_t}\big[\nabla\phi_K(\xi_t-\eta\,\hat Y_{t-1})\big]\\
    &=\nabla\calr^*(-\eta\,\hat Y_{t-1})\\
    &=\nabla\Phi(-\hat Y_{t-1})\,,
\end{align*}
where the second equality follows from the definition of self-concordant perturbations. This shows that the sampling scheme is unbiased. Moreover,
\begin{equation*}
    \E_{t-1}[\hat y_t]=Q_t^{-1}\cdot\E_{t-1}\big[a_ta_t^\top\big]y_t=y_t\,.
\end{equation*}
Thus, Self-Concordant FTPL satisfies the unbiasedness condition of Bandits-GBPA, and Lemma~\ref{lemma:bandits_gbpa} yields the regret decomposition
\begin{equation}\label{eq:proof_thm7_0}
    R_n(u)\le \Phi^*(u)-\min_{x \in K}\Phi^*(x) + \sum_{t=1}^n \E\big[B_\Phi(-\hat Y_t,-\hat Y_{t-1})\big]\,,
\end{equation}
for all $u\in K$. Expressing (\ref{eq:proof_thm7_0}) in terms of $\calr$, we obtain
\begin{equation}\label{eq:proof_thm7_1}
    R_n(u)\le \frac{\calr(u)-\min_{x\in K}\calr(x)}{\eta}+\frac{1}{\eta}\sum_{t=1}^n \E\big[B_{\calr^*}(-\eta \hat Y_{t},-\eta\hat Y_{t-1})\big]\,.
\end{equation}
Let $x^*=\argmin_{x \in K}\calr(x)$ and $\kappa\in(0,1)$, and define $u'=\kappa x^*+(1-\kappa){u}$. Then $\pi_{x^*}(u')=(1-\kappa)\pi_{x^*}(u)\le 1-\kappa$. By Equation (\ref{eq:sc_barrier_growth}), it follows that
\[\calr(u')-\calr(x^*)\le -\vartheta\ln \kappa\,.\]
We now decompose the regret as
\begin{align}
R_n(u)
&=\E\Big[\sum_{t=1}^n\sca{y_t}{a_t-u'+u'-u}\Big]\notag\\
&=R_n(u')+\sum_{t=1}^n\sca{ y_t}{u'-u}\notag\\
&\le \frac{\calr (u')-\calr(x^*)}{\eta}+\frac{1}{\eta}\sum_{t=1}^n \E[B_{\calr^*}(-\eta \hat Y_{t},-\eta\hat Y_{t-1})]+\kappa\sum_{t=1}^n\sca{y_t}{x^*-u}\notag\\
&\le -\frac{1}{\eta}\vartheta \ln \kappa +\frac{1}{\eta}\sum_{t=1}^n \E[B_{\calr^*}(-\eta \hat Y_{t},-\eta\hat Y_{t-1})] + 2n\kappa\,,\label{eq:proof_thm7_15}
\end{align}
where the first inequality uses (\ref{eq:proof_thm7_1}) in $u'$.
Since (\ref{eq:proof_thm7_15}) holds for any $\kappa\in(0,1)$, taking $\kappa=\frac{1}{n}$ gives
\begin{equation}\label{eq:proof_thm7_2}
    R_n(u)\le \frac{1}{\eta}\vartheta \ln n +\frac{1}{\eta}\sum_{t=1}^n \E[B_{\calr^*}(-\eta \hat Y_{t},-\eta\hat Y_{t-1})] + 2\,.
\end{equation}
It remains to bound the Bregman divergence term.
Since $\|\eta \hat y_t\|_t \le 1/2$ almost surely for all $t \in [n]$, 
we can apply the local smoothness inequality of (\ref{eq:local_sc}), which yields
\begin{align*}
    R_n(u)&\le \frac{1}{\eta}\vartheta \ln n +\frac{1}{\eta}\sum_{t=1}^n \E[\|-\eta \hat Y_{t}+\eta\hat Y_{t-1}\|_{\nabla^2\calr^*(-\eta\hat Y_{t-1})}^2] + 2\\
    &= \frac{1}{\eta}\vartheta \ln n +\eta\sum_{t=1}^n \E[\|\hat y_t\|_{t}^2] + 2\,.
\end{align*}
\end{proof}

\section{Estimation Scheme of \textsc{Sc-Ftpl}}\label{appendix:non_singularity_of_Q}

\subsection{Proof of Proposition \ref{prop:full_support_implies_Q_invertible}}
\begin{proof}
Let $\theta\in\R^d$. The matrix $Q$ is symmetric and positive semi-definite by construction. To prove non-singularity, it suffices to show that $Q$ is positive definite, i.e., $u^\top Qu>0$ for all $u\in \R^d\backslash\{0\}$. Consider an arbitrary non-zero vector $u$. We have that
$$u^\top Qu=\E_{\xi\sim D}[(u^\top\nabla\phi_K(\theta+\xi))^2]\,.$$
Since the integrand is non-negative, the expectation is zero if and only if $u^\top \nabla \phi_K(\theta + \xi) = 0$ almost surely with respect to $D$. Since $D$ is absolutely continuous with full support on $\mathbb{R}^d$, this condition is equivalent to the set $Z = \{ y \in \mathbb{R}^d : \nabla \phi_K(y) \in u^\perp \}$ having full Lebesgue measure in $\mathbb{R}^d$. Thus, in order to prove $u^\top Qu>0$, it suffices to show that $\R^d\,\backslash\,Z$ has non-zero Lebesgue measure.

Since $K$ is bounded and has non-empty interior, there exists $x \in K$ and $\varepsilon, M > 0$ such that $B(x, \varepsilon) \subset K \subset B(0, M)$. Without loss of generality, we may assume $\langle u, x \rangle \ge 0$. Define the cone
$$C=\left\{y \in \R^d: \sca{y}{\frac{u}{\|u\|}} > \frac{2M}{\varepsilon}\|p_{u^\perp}(y)\|\right\}\,,$$
where $p_{u^\perp}$ denotes the orthogonal projection onto $u^\perp$. The set $C$ is open and non-empty, and thus has non-zero Lebesgue measure. We now show that $C\cap Z=\emptyset$.

Let $y\in C$. Consider the point $z=x+\varepsilon\frac{u}{\|u\|} \in K$. It holds that
\begin{align*}
    \sca{y}{z}&= \sca{p_u(y)}{p_u(x)} + \sca{p_{u^\perp}(y)}{p_{u^\perp}(x)}+\varepsilon \sca{y}{\frac{u}{\|u\|}}\\
    &> -\|p_{u^\perp}(y)\|\,\|p_{u^\perp}(x)\| + 2M \|p_{u^\perp}(y)\|\\
    &\ge M \,\|p_{u^\perp}(y)\|
\end{align*}
Moreover, for any $h\in K\cap u^\perp$, observe that
$$\sca{y}{h}=\sca{p_{u^\perp}(y)}{h}\le M \,\|p_{u^\perp}(y)\|\,.$$
Combining these results, we strictly have $\langle z, y \rangle > \sup_{h \in K \cap u^\perp} \langle h, y \rangle$. Since 
$$\nabla \phi_K(y) \in \arg\max_{w \in K}\langle w, y \rangle,$$
it follows that $\nabla \phi_K(y) \notin u^\perp$. Therefore, $\langle \nabla \phi_K(y), u \rangle \neq 0$ for all $y \in C$, implying $C \subseteq \R^d \,\backslash \,Z$. Since $C$ has non-zero measure, $u^\top Q u > 0$.
\end{proof}

\subsection{Ill-Definition of the Estimator}

The \SCFTPL\space estimator estimates the loss vector $y_t$ using
\begin{equation*}
    \hat y_t=Q_t^{-1}a_t\sca{a_t}{y_t}
    \quad\text{with }Q_t=\E_{t-1}[a_ta_t^\top ]\,.
\end{equation*}
For this estimator to be well-defined, the covariance matrix $Q_t$ must be non-singular. We established in Proposition \ref{prop:full_support_implies_Q_invertible} that if the perturbation distribution $\mathcal{D}$ has full support, then $Q_t$ is invertible. Furthermore, in Section \ref{section:specific_action_sets}, when $K$ is the hypercube or the $\ell_2$ ball, we explicitly constructed self-concordant perturbations satisfying this condition.

A natural theoretical question is whether the self-concordance property alone is sufficient to guarantee the non-singularity of $Q_t$. In this section, we show that the answer is negative. We provide a counterexample of a valid self-concordant perturbation that results in a singular covariance matrix.

Consider the hypercube $K=[-1,1]^d$ with $d\ge 2$ and the univariate density function
$$f(t)= \frac{1}{2t^2}-\frac{1}{2\sinh^2t}\,,\quad t\in\R\,.$$
Proposition \ref{prop:scperturbation_hypercube_existence} established that if a random vector $\xi$ has density $p(x) = \prod_{i=1}^d f(x_i)$,
i.e., if the coordinates of $\xi$ are independent and have density $f$, then the distribution $\cald$ of $\xi$ is a $d$-self-concordant perturbation for $K$. Since $\mathcal{D}$ has full support on $\mathbb{R}^d$, the resulting covariance matrix $Q_t$ is always non-singular.

However, the self-concordance property for the hypercube relies solely on the marginal distributions of $\xi$, not on the independence of its coordinates. We exploit this to construct a degenerate perturbation. Let $\mathcal{D}'$ be an absolutely continuous distribution on $\mathbb{R}^d$ defined by the density:
$$q(x) = 2^{d-1} \cdot \mathbf{1}\left\{x \in (\mathbb{R}_+)^d \cup (\mathbb{R}_-)^d\right\} \cdot \prod_{i=1}^d f(x_i)\,.$$
Because the function $f$ is even, one can verify that the marginal distributions of $\cald'$ are identical to those of $\mathcal{D}$, each coordinate has density $f$.  Consequently, $\mathcal{D}'$ remains a valid $d$-self-concordant perturbation for $K$.

Consider the first round of \SCFTPL\space ($t=1$) with $\hat{Y}_0 = 0$ and perturbation $\xi_1 \sim \mathcal{D}'$. 
The action is given by:
$$a_1 = \argmin_{a \in K} \langle a, -\xi_1 \rangle = \left( \operatorname{sgn}(\xi_{1,i}) \right)_{1\le i\le d}.$$
By construction of $\mathcal{D}'$, the vector $\xi_1$ lies almost surely in $(\mathbb{R}_+)^d \cup (\mathbb{R}_-)$. 
Therefore, $a_1$ takes values in $\{\mathbf{1}_d, -\mathbf{1}_d\}$ almost surely. In both cases, the outer product is the full matrix $a_1 a_1^\top = \mathbf{1}\mathbf{1}^\top$. 
Taking the expectation yields $Q_1=\mathbf 1 \mathbf 1^\top$. Thus the matrix $Q_1$ has rank 1 and is therefore singular.

\section{Proofs of Results on the Hypercube}\label{appendix:hypercube}

\subsection{Proof of Proposition \ref{prop:scperturbation_hypercube_existence}}\label{appendix:scperturbation_hypercube_existence}

\begin{proof}
The function $f$ is non-negative on $\R$ and satisfies $\int_\R f=1$, so it is the density function of some probability distribution over $\R$. Thus, there exists some probability distribution $\cald$ on $\R^d$ such that for $\xi\sim\cald$, the coordinates $\{\xi_i\}_i$ are independent and identically distributed and have probability density function $f$.

We now prove that $\cald$ replicates $\calr$. For all $\theta\in\R^d$ and $i\in[d]$,
\begin{align*}
    \E_{\xi\sim\cald}[\nabla\phi_K(\theta+\xi)]_i
    &= \E[\sgn(\theta_i+\xi_i)]\\
    &=\P(\theta_i+\xi_i\ge 0)- \P(\theta_i+\xi_i\le 0)\\
    &=1-2F_i(-\theta_i)\,,
\end{align*}
where $F_i$ is the cumulative distribution function of $\xi_i$. Notice that $f(t)=\frac{1}{2}L'(t)$. Thus,
$$F_i(t) = \frac{1}{2}(L(t)-L(0))+F_i(0)=\frac{L(t)+1}{2}$$
and
$$\E_{\xi\sim\cald}[\nabla\phi_K(\theta+\xi)]_i=-L(-\theta_i)=L(\theta_i)=\frac{\partial \calr^*(\theta)}{\partial\theta_i}\,,$$
which concludes the proof.
\end{proof}

\subsection{Proof of Proposition \ref{prop:scperturbation_hypercube_bound}}\label{appendix:scperturbation_hypercube_bound}

\begin{proof}
Let $t\in[n]$. For brevity, denote by $a$ the action chosen by \SCFTPL\space at time $t$, $x = \E_{t-1}[a]$ and $Q = \E_{t-1}[aa^\top]$. We also denote $H=\nabla^2\calr^*(-\eta\hat Y_{t-1})$ the matrix such that $\|\cdot\|_t=\|\cdot\|_H$.
\paragraph{Step 1: Computing the expression of $Q^{-1}$.} We first show that $Q$ is non-singular and compute the expression of $Q$. Let $i, j \in[d]$. If $i\ne j$, then $\xi_i$ is independent from $\xi_j$, thus so are $a_i$ and $a_j$ and we have
\[Q_{i,j}=\E_{t-1}[a_ia_j]=\E_{t-1}[a_i]\E_{t-1}[a_j]=x_ix_j\,.\]
If $i=j$, then $Q_{i,i}=\E[a_i^2]$. Because $a\in\operatorname{extr}([-1,1]^d)$, we have that $a_i^2=1$ almost surely, and then $Q_{i,i}=1$. Thus,
\[
    Q = x x^\top  + \Cov(a)\,, \quad \text{where } \Cov(a) = \diag([1 - x_i^2]_{i=1}^d).
\]
A direct computation shows that $Q$ is non-singular, with
\[Q^{-1}=\Big(\frac{-1}{1+\alpha}(\Cov a)^{-1} xx^\top +I_d\Big)(\Cov a)^{-1}\,,\]
with $$\alpha\coloneq x^T(\Cov a)^{-1} x= \sum_{i=1}^d \frac{x_i^2}{(1-x_i^2)}\,.$$

\paragraph{Step 2: Bounding the local norm variance.} Let $y$ denote the loss vector and $\hat y = Q^{-1} a \sca{y}{a}$ the loss vector estimator. It holds that
\begin{align*}
    \E_{t-1}[\|\hat y\|_t^2]
    &= \E_{t-1}[\sca{y}{a}a^\top Q^{-1} HQ^{-1} a \sca{a}{y}]\\
    &\le \E_{t-1}[a^\top Q^{-1} HQ^{-1} a]\\
    &=\tr\left(HQ^{-1}\right)\,,
\end{align*}
where the inequality follows from the boundedness of the losses and the last equality uses the cyclic invariance of the trace. We now develop the expression of $H$, which is
\[H = \nabla^2\calr^*(\nabla\calr(x))=\diag\left[L'(L^{-1}(x_i))\right]_{1\le i\le d}\,,\]
where
$$L(t)=\coth(t)-\frac{1}{t}$$ is the Langevin function, which is a diffeormorphism between $\R$ and $(-1,1)$.
Thus, combining the expression of $H$ and $Q^{-1}$, we get
\begin{align}
    \tr(HQ^{-1})&=\sum_{i=1}^d \left(\frac{-1}{1+\alpha}\frac{x_i^2}{1-x_i^2}+1\right)\frac{1}{1-x_i^2} L'(L^{-1}(x_i))\notag\\
    &\le \sum_{i=1}^d \frac{L'(L^{-1}(x_i))}{1-x_i^2}\label{equation:trHQ-1}
\end{align}
Then, we want to show that
$$g(t)\coloneq \frac{L'(t)}{1-L(t)^2}\le \frac{1}{3}$$
for all $t\in\R$. Because $g$ is even, it suffices to prove the inequality for $t\ge 0$. Developing the expression of $L$, we have that
$$g(t)=\frac{t^2-\sinh^2t}{t^2+\sinh^2t-t\sinh(2t)}\,.$$
For all $t\ge0$, $t\le \sinh t$, so the numerator is nonpositive. Because $g$ is nonnegative, this implies that the denominator is nonpositive, and thus
$$g(t)\le \frac{1}{3} \iff 3(t^2-\sinh^2t)\ge t^2+\sinh^2t-t\sinh(2t)\,.$$
Rearranging terms, we define $h(t)\coloneq 2t^2-4\sinh^2t+t\sinh(2t)$ and aim to show that $h(t)\ge 0$. The successive derivatives of $h$ are:
\begin{align*}
    h'(t)&=4t-3\sinh(2t)+2t\cosh(2t)\\
    h''(t)&=4-4\cosh(2t)+4t\sinh(2t)\\
    h^{(3)}(t)&=-4\sinh(2t)+8t\cosh(2t)\\
    h^{(4)}(t)&=16t\sinh(2t)
\end{align*}
For $t\ge 0$, $h^{(4)}(t)\ge 0$. Since $h^{(3)}(0)=0$, $h^{(3)}(t)\ge 0$ for $t\ge 0$. Similarly, since $h''(0)=0$, $h''(t)\ge 0$ for all $t\ge 0$, and continuing this logic $h'(t)\ge 0$ and finally $h(t)\ge 0$ for all $t\ge 0$. Hence, the inequality $g(t)\le 1/3$ holds for all $t\in\R$. Plunging this inequality into (\ref{equation:trHQ-1}) gives $\tr(HQ^{-1})\le \sum_{i=1}^d 1/3$, and finally
$$\E_{t-1}[\|\hat y\|_t^2]\le \frac{d}{3}$$

\paragraph{Step 3: Bounding the local norm almost surely.} Finally, we derive the almost-sure bound on $\|\hat y\|_t^2$. First, observe that
\begin{align*}
    \|\hat y\|_t^2 &\le a^\top Q^{-1}HQ^{-1}a\\
    &= \sum_{i=1}^d \sum_{j=1}^d a_i [Q^{-1}HQ^{-1}]_{i,j}a_j\\
    &\le \sum_{i,j} |[Q^{-1}HQ^{-1}]_{i,j}|
\end{align*}
Plugging in the closed-form expressions of $Q^{-1}$ and $H$ yields
\begin{align*}
\|\hat y\|_t^2
&\le \frac{1}{(1+\alpha)^2}\left(\sum_{i=1}^d \frac{|x_i|}{1-x_i^2}\right)^2 \sum_{k=1}^d \frac{x_k^2 \,L'(L^{-1}(x_k))}{(1-x_k^2)^2}
\notag\\
&\qquad+\frac{2}{1+\alpha} \left(\sum_{i=1}^d \frac{|x_i|}{1-x_i^2}\right)\sum_{k=1}^d \frac{|x_k| \,L'(L^{-1}(x_k))}{(1-x_k^2)^2}+ \sum_{k=1}^d \frac{L'(L^{-1}(x_k))}{(1-x_k^2)^2}
\end{align*}
Let $C=\sup_{x\in(-1,1)} \frac{L'(L^{-1}(x))}{(1-x^2)^2}\in[0,+\infty]$. We can bound the previous expression by
\begin{equation}\label{eq:proof_prop10_2}
\|\hat y\|_t^2
\le \frac{C}{(1+\alpha)^2}\left(\sum_{i=1}^d \frac{|x_i|}{1-x_i^2}\right)^2 \sum_{k=1}^d x_k^2
+\frac{2C}{1+\alpha} \left(\sum_{i=1}^d \frac{|x_i|}{1-x_i^2}\right)\sum_{k=1}^d |x_k|+ dC
\end{equation}
By the Cauchy–Schwarz inequality,
\[
    \sum_{k=1}^d |x_k|
    \le \sqrt{d \sum_{k=1}^d x_k^2},
    \quad \text{and} \quad
    \sum_{i=1}^d \frac{|x_i|}{1 - x_i^2}
    \le \sqrt{\sum_{i=1}^d \frac{x_i^2}{1 - x_i^2}}
        \sqrt{\sum_{i=1}^d \frac{1}{1 - x_i^2}}
    = \sqrt{\alpha (d + \alpha)}\,.
\]
Plugging theses inequalities into (\ref{eq:proof_prop10_2}) gives
\begin{align*}
\|\hat y\|_t^2&\le \frac{C \alpha(d+\alpha)}{(1+\alpha)^2} \sum_{k=1}^dx_k^2 + \frac{2C}{1+\alpha} \sqrt{d\alpha(d+\alpha)\sum_{k=1}^dx_k^2}+dC
\end{align*}
Moreover, it holds that
\begin{equation*}
(d+\alpha)\sum_{k=1}^dx_k^2 \le d\sum_{k=1}^d \frac{x_k^2}{1-x_k^2}+\alpha\sum_{k=1}^d 1=2\,\alpha d\,,
\end{equation*}
and hence
\begin{equation}
    \|\hat y\|_t^2 \le \frac{2C\alpha^2 d}{(1+\alpha)^2}+\frac{2\sqrt{2}C\alpha d}{1+\alpha}+dC \le (3+2\sqrt{2})dC
\end{equation}
Finally, we bound the value of $C$.
Let
$$g(t)=\frac{L'(t)}{(1-L^2(t))^2}\,$$
for all $t\in\R$. This function is even so it suffices to analyze its behavior on $\R^+$. 
We want to show that for all $t>0$, $g(t)\le 1/2$. Direct computation gives
\[
L'(t) = \frac{1}{t^2} - \frac{1}{\sinh^2 t}, 
\qquad
1 - L(t)^2 = \frac{2 t \cosh t \, \sinh t - t^2 - \sinh^2 t}{t^2 \sinh^2 t}.
\]
Hence, the desired inequality is equivalent to
\[
(1 - L^2)^2 - 2 L' \ge 0.
\]
Clearing the denominator $t^4 \sinh^4 t$ and setting
\[
N(t) := \big(2 t \cosh t \, \sinh t - t^2 - \sinh^2 t\big)^2 
- 2 t^2 \sinh^2 t \, (\sinh^2 t - t^2),
\]
one obtains after a short algebraic rearrangement the identity
\[
N(t) = \Big( t^2 - t \sinh(2t) + \tfrac{1}{2}(\cosh(2t)-1) \Big)^2 
+ 2 t^2 \sinh^2 t \, (\sinh t - t)^2.
\]
Both terms on the right-hand side are nonnegative for $t>0$, so $N(t) \ge 0$ and therefore $(1-L^2)^2 - 2 L' \ge 0$. This yields $g(t) \le 1/2$, as claimed, and thus $C\le 1/2$. Hence we have,
$$\|\hat y\|_t^2  \le (3+2\sqrt{2})\cdot \frac{1}{2}\cdot d \le 3 \, d\,,$$
which concludes the proof.
\end{proof}

\subsection{Proof of Theorem \ref{thm:regret_scftpl_hypercube}} \label{appendix:regret_scftpl_hypercube}

\begin{proof}
Almost surely, it holds that for all $t\in[n]$,
\begin{equation*}
    4\,\eta^2\,\|\hat y_t\|_t^2
    \le 12 \,d\,\eta^2 
= 36 \,d \,\frac{\ln n}{n} \le 1\,,
\end{equation*}
where the first inequality follows from the almost sure bound in Proposition \ref{prop:scperturbation_hypercube_bound}, the equality from the definition of $\eta$ and the last inequality from the theorem's assumption. Thus, the requirements of Theorem \ref{thm:self-conc-linear-bandits} are satisfied, yielding the following regret bound:
\[R_n\le \frac{\vartheta \ln n}{\eta}+\eta \sum_{t=1}^n \E[\|\hat y_t\|_t^2]+2\,,\]
By Proposition \ref{prop:scperturbation_hypercube_bound}, we have $\E[\|\hat y_t\|_t^2]\le d/3$ for all $t$. Furthermore, since the self-concordance parameter $\vartheta$ of $\cald$ equals $d$, it follows that
\[R_n \le \frac{d \ln n}{\eta}+\frac{1}{3}\eta n d+2\,.\]
This bound is optimal for $\eta=\sqrt{\frac{3\ln n}{n}}$, which yields the claimed result.
\end{proof}

\subsection{Complexity Analysis of \SCFTPL\space on the Hypercube}\label{appendix:complexity_analysis_hypercube}

At each round $t\in[n]$, \SCFTPL\space performs three main steps: sampling $\xi_t$ from $\cald$, solving the linear program $a_t=\argmin_{a\in K}\sca{a}{\eta \hat Y_{t-1}-\xi_t}$, and computing the estimator $\hat y_t=Q_t^{-1}a_t\sca{y_t}{a_t}$.

Sampling from $\cald$ given in Proposition \ref{prop:scperturbation_hypercube_existence} reduces to drawing $d$ i.i.d. random variables with probability density function $f$. Using the fact that $f(t)\le 1/(1+t^2)$ for all $t\in\R$, we can use rejection sampling with a Cauchy proposal to sample each coordinate in expected constant time. So the overall sampling step requires $\O(d)$ operations.

The linear program over the hypercube admits the closed-form solution 
$$a_{t,i}=\sgn(-\eta \hat Y_{t-1,i}+\xi_{t,i})$$
for each coordinate $i\in[d]$, which can be computed in constant time per coordinate, yielding $\O(d)$ complexity.

To compute the estimator, we first compute $x_t=\nabla\calr^*(-\eta\hat Y_{t-1})$, which is separable across coordinates and therefore requires $\O(d)$ time. From the proof of Proposition \ref{prop:scperturbation_hypercube_bound}, $Q_t^{-1}a_t$ admits the closed form
\begin{equation*}
    Q_t^{-1}a_t=\Big(\frac{-1}{1+\alpha}\frac{x_{t,i}}{{1-x_{t,i}^2}}\sum_{j=1}^d\frac{x_{t,j}a_{t,j}}{1-x_{t,j}^2}+\frac{a_{t,i}}{{1-x_{t,i}^2}}\Big)_{1\le i\le d}
\end{equation*}
where $\alpha=\sum_{i=1}^d x_{t,i}^2/({1-x_{t,i}^2})$.
This allows computing $Q_t^{-1}a_t$ efficiently: we first evaluate $\alpha$ and 
$$\beta=\sum_{j=1}^d \frac{x_{t,j}a_{t,j}}{1-x_{t,j}^2}$$
in $\O(d)$, and then each coordinate of $Q_t^{-1}a_t$ can be computed in constant time as
$$[Q_t^{-1}a_t]_i = \frac{-\beta}{1+\alpha}\frac{x_{t,i}}{{1-x_{t,i}^2}} +  \frac{a_{t,i}}{{1-x_{t,i}^2}}.$$

Combining all steps, \SCFTPL\space on the hypercube has a total per-round complexity of $\O(d)$.
For comparison, both \scrible\space and \textsc{Osmd} enjoy a per-round complexity in $\O(d)$ when the action set of the hypercube. Indeed, for \scrible, when the action set is the hypercube, we have access to closed forms for the eigenvalues and eigenvectors of $\nabla^2\calr(x_t)$, which allows to attain a per-round complexity in $\O(d)$.

\section{Proofs of Results on the \texorpdfstring{$\ell_2$}{l2} Ball}\label{appendix:ball}

\subsection{Proof of Proposition \ref{prop:scperturbation_ball_existence}}

The proof of Proposition \ref{prop:scperturbation_ball_existence} follows from the following lemma, which gives the expression of the expected value of a multivariate $t$-distribution projected over the unit sphere.
\begin{lemma}\label{lemma:projected_student}
Let $\mathbf T$ be a random vector in $\R^d$ following a multivariate $t$-distribution with location $\mu\in\R^d$, scale matrix $I_d$ and $d+1$ degrees of freedom. Then,
\[\E\left[\frac{\mathbf T}{\|\mathbf T\|}\right] = \frac{\mu}{\sqrt{d+1+\|\mu\|^2}}\,.\]
\end{lemma}

\begin{proof}
\textbf{of Lemma \ref{lemma:projected_student}. }
Let $\mathbf T$ be a random vector following a multivariate $t$-distribution with location $\mu\in\R^d$, scale matrix $I_d$, and $d+1$ degrees of freedom. We use the stochastic representation of the multivariate $t$-distribution \citep{kotz2004multivariate}: for independent random variables $\mathbf N \sim \mathcal{N}(0, I_d)$ and $Z \sim \chi_{d+1}$, the random vector
\[\mu + \frac{\sqrt{d+1}}{Z}\mathbf N\]
has the same distribution as $\mathbf T$. Consequently, the expected projection can be expressed as
\[\E\left[\frac{\mathbf T}{\|\mathbf T\|}\right] = \E\left[\frac{Z\mu+\sqrt{d+1}\mathbf N}{\|Z\mu+\sqrt{d+1}\mathbf N\|}\right]\,.\]
Conditioning on $Z$, the vector $Z\mu + \sqrt{d+1}\mathbf N$ follows a multivariate normal distribution $\mathcal{N}(Z\mu, (d+1)I_d)$. Applying the known characterization for the mean of a projected isotropic normal distribution \citep{herreraesposito2025projectednormaldistributionmoment}, we have:
\[\E\left[\frac{Z\mu+\sqrt{d+1}\mathbf N}{\|Z\mu+\sqrt{d+1}\mathbf N\|}\,\Bigg|\,Z\right]=\frac{\Gamma(\frac{d+1}{2})}{\sqrt{2(d+1)}\,\Gamma(\frac{d}{2}+1)} \;{}_1F_1\left(\frac{1}{2};\frac{d}{2}+1;\frac{-Z^2\|\mu\|^2}{2(d+1)}\right) Z\mu\,,\]
where ${}_1F_1(a;b;z)$ denotes the confluent hypergeometric function. By the tower rule, it follows that
\[\E\left[\frac{\mathbf T}{\|\mathbf T\|}\right] = \mu\,\frac{\Gamma(\frac{d+1}{2})}{\sqrt{2(d+1)}\,\Gamma(\frac{d}{2}+1)}\,\E\left[{}_1F_1\left(\frac{1}{2};\frac{d}{2}+1;\frac{-Z^2\|\mu\|^2}{2(d+1)}\right)Z\right].\]
Given that $Z^2 \sim \chi_{d+1}^2$, we evaluate the remaining expectation in integral form
\begin{align*}
&\E\left[{}_1F_1\left(\frac{1}{2};\frac{d}{2}+1;\frac{-Z^2\|\mu\|^2}{2(d+1)}\right)Z\right]\\&\qquad= \frac{1}{2^{\frac{d+1}{2}}\,\Gamma(\frac{d+1}{2})}\int_0^{+\infty}{}_1F_1\left(\frac{1}{2};\frac{d}{2}+1;-t\frac{\|\mu\|^2} {2(d+1)}\right)\,t^{d/2}\,e^{-t/2}\,dz.
\end{align*}
Using the standard identity for integrals of confluent hypergeometric functions \citep{gradshteyn2007table}, the integral evaluates to
\[\int_0^{+\infty}{}_1F_1\left(\frac{1}{2};\frac{d}{2}+1;-t\frac{\|\mu\|^2} {2(d+1)}\right)\,t^{d/2}\,e^{-t/2}\,dz= \Gamma\left(\frac{d}{2}+1\right)\,2^{\frac{d}{2}+1} \left(1+\frac{\|\mu\|^2}{d+1}\right)^{-1/2}\,.\]
Substituting this result back into the main expression and simplifying the constants, we arrive at the final form
\[\E\left[\frac{\mathbf T}{\|\mathbf T\|}\right] =\frac{\mu}{\sqrt{n+1+ \|\mu\|^2}}\,.\]
\end{proof}

\begin{proof}\textbf{of Proposition \ref{prop:scperturbation_ball_existence}. }
Let $\cald$ be the distribution of $\xi$. By construction, $\cald$ is an absolutely continuous distribution on $\R^d$ with full support. We now show that $\cald$ replicates the 1-self-concordant barrier $x\in\inter\B^d\mapsto -\ln(1-\|x\|^2)$. It suffices to show that for all $\theta\in\R^d$,
\begin{equation*}
    \E_{\xi}[\nabla\phi_{\B^d}(\theta+\xi)]=\nabla\calr^*(\theta)=\theta\cdot \frac{\sqrt{1+\|\theta\|^2}-1}{\|\theta\|^2}\,.
\end{equation*}
For all $x\in\R^d\,\backslash\,\{0\}$, the differential of the support function of $\B^d$ in $x$ is
\[\nabla\phi_{\B^d}(x)=\argmax_{\|a\|\le 1} \sca{a}{x}= \frac{x}{\|x\|}\,.\]
Thus,
\[\E[\nabla\phi_{\B^d}(\theta+\xi)]= \E\left[\frac{\theta+\xi}{\|\theta+\xi\|}\right]=\E\left[\frac{\sqrt{d+1}\,U\theta+\mathbf T}{\|\sqrt{d+1}\,U\theta+\mathbf T\|}\right]\,.\]
Conditionally to $U$, $\sqrt{d+1}\,U\theta+\mathbf T$ follows a multivariate $t$-distribution with location $\sqrt{d+1}\,U\theta$, scale matrix $I_d$ and $d+1$ degrees of freedom. Hence, by Lemma \ref{lemma:projected_student},
\[\E\left[\frac{\sqrt{d+1}\,U\theta+\mathbf T}{\|\sqrt{d+1}\,U\theta+\mathbf T\|}\,\Bigg|\,U\right]=\frac{\sqrt{d+1}\,U\,\theta}{\sqrt{d+1+(d+1)U^2\|\theta\|^2}}= \frac{U\,\theta}{\sqrt{1+U^2 \|\theta\|^2}}\]
Finally, by tower rule,
\begin{align*}
    \E[\nabla\phi_{\B^d}(\theta+\xi)]&= \E\left[\frac{U\,\theta}{\sqrt{1+U^2 \|\theta\|^2}}\right]\\
    &=\theta\,\int_0^1 \frac{u}{\sqrt{1+\|\theta\|^2u^2}}du\\
    &=\theta\, \frac{\sqrt{1+\|\theta\|^2}-1}{\|\theta\|^2}\,,
\end{align*}
which concludes the proof.
\end{proof}

\subsection{Proof of Proposition \ref{prop:scperturbation_ball_bound}}\label{appendix:scperturbation_ball_bound}

\begin{proof}
Let $t\in[n]$. For the sake of brevity, we denote $\theta=-\eta\,\hat Y_{t-1}$ and $\xi=\xi_t$. The action chosen at time $t$ by \SCFTPL\space is then $a=\nabla\phi_{\B^d}(\theta+\xi)$, and its covariance matrix is
\[Q=\E_{t-1}[aa^\top] = \E\left[ \frac{(\theta+\xi)(\theta+\xi)^\top}{\|\theta+\xi\|^2}\right]\,.\]
The expectation is with respect to $\xi$, conditionally on the $t-1$ first rounds. We also denote $y=y_t$ the loss vector at time $t$ and $\hat y=Q^{-1}aa^\top y$ the estimator. Finally, we recall that the norm $\|\cdot\|_t$ is the energy norm relatively to the matrix $\nabla^2\calr^*(\theta)$.
\paragraph{Proof Outline.}
In Step 1, we use the rotational symmetry of the distribution of $\xi$ to show that $Q$ belongs to the linear span of $(P_\theta, \Pperp)$, where $P_\theta$ and $\Pperp$ are the orthogonal projection matrices onto $\theta$ and $\theta^\perp$, respectively. 
In Step 2, we develop the expression of the eigenvalue $q_\perp$ corresponding to $\Pperp$ and show that it can be reduced to a one-dimensional integral expression. Subsequently, in Step 3 we derive lower bounds on both $q$ and $q_\perp$, the eigenvalues corresponding to $P_\theta$ and $\Pperp$, respectively, using the integral expression of $q_\perp$ and an affine relation between $q$ and $q\perp$. Step 4 provides the explicit expression for $Q^{-1}$ and for the Hessian $\nabla^2\calr^*(\theta)$. In Steps 5, 6, and 7, we combine the explicit expressions for $Q^{-1}$ and $\nabla^2\calr^*(\theta)$ with the bounds on $q$ and $q_\perp$ to derive the claimed bounds on the estimator $\hat y$. Finally, in Step 8, we analyze the degenerate case $\theta=0$, where $P_\theta$ is not well-defined, and show that the inequalities still hold.
\paragraph{Step 1: Proving that $Q\in\operatorname{span}(P_\theta, P_{\theta^\perp})$.}
By the stochastic representation of the multivariate $t$-distribution, for independent random variables $\mathbf N\sim\mathcal N(0,I_d)$, $Z\sim\chi_{d+1}$, and $U\sim\mathcal{U}([0,1])$, the random vector
${\mathbf N}/{(Z\,U)}$ follows the same distribution as $\xi$. So
\begin{equation}\label{eq:proof_G2_eq1}
Q = \E\left[ \frac{(ZU\theta+\mathbf N)(ZU\theta+\mathbf N)^\top}{\|ZU\theta+\mathbf N\|^2}\right].
\end{equation}
Conditioning on $Z$ and $U$, $ZU\theta+\mathbf N$ follows a multivariate normal distribution with mean $ZU\theta$ and isotropic covariance. 
Due to the rotational symmetry of this normal distribution around the axis defined by $ZU\theta$, it is a known result \citep{herreraesposito2025projectednormaldistributionmoment} that
\[\E\left[ \frac{(ZU\theta+\mathbf N)(ZU\theta+\mathbf N)^\top}{\|ZU\theta+\mathbf N\|^2}\Bigg|\,Z,U\right] \in \operatorname{span}((ZU\theta)(ZU\theta)^\top, I_d) = \operatorname{span}(\theta\theta^\top,I_d)\,.\]
Then, by tower rule and linearity of the expectation,
\[Q = \E\left[\E\left[ \frac{(ZU\theta+\mathbf N)(ZU\theta+\mathbf N)^\top}{\|ZU\theta+\mathbf N\|^2}\Bigg|\,Z,U\right]\right] \in\operatorname{span}(\theta\theta^\top,I_d). \]
Let
\[P_\theta\coloneq \frac{\theta\theta^\top}{\|\theta\|^2}\,,\quad\text{and}\quad P_{\theta^\perp}\coloneq I_d - P_\theta\]
be the orthogonal projection matrices onto $\theta$ and $\theta^\perp$, respectively. Then, $\spa(\theta\theta^\top,I_d)=\spa(P_\theta,\Pperp)$. Hence, there exists $q,q_\perp\in\R$ such that
\begin{equation}\label{eq:proof_G2_eq2}
    Q=q\,P_\theta+q_\perp\,\Pperp\,.
\end{equation}

\paragraph{Step 2: Developing $q_\perp$.}
We will now develop $q_\perp$ in order to derive upper and lower bounds. First notice that we only need to study one among $q$ an $q_\perp$ as it holds that
\begin{align*}
\tr Q = q\tr (P_\theta) + q_\perp \tr(\Pperp) = q+(d-1)q_\perp
\end{align*}
and
\begin{align*}
\tr Q = \tr\E_{t-1}[aa^\top]=\E_{t-1}[\tr(aa^\top)]=\E_{t-1}[\|a\|^2]=1\,.
\end{align*}
Hence,
\begin{equation}\label{eq:proof_prop10_25}
q=1-(d-1)q_\perp\,.
\end{equation}

Now let $y\in \theta^\perp$ be such that $\|y\|=1$. Multiplying (\ref{eq:proof_G2_eq1}) by $y$ on the left and on the right, it holds that
\[q_\perp=y^\top Q y = \E\left[\frac{(y^\top \mathbf N)^2}{\|ZU\theta+\mathbf N\|^2}\right]\,.\]

To evaluate this expectation, we can always rotate our coordinate system without loss of generality so that $\theta=\|\theta\|e_1$ and $y=e_2$. Then, since $\mathbf N\sim\mathcal N(0,I_d)$ is isotropically distributed, the rotation preserves its distribution. We thus have that that
\begin{equation*}
y^\top \mathbf N = \mathbf N_2
\quad\text{and}\quad ZU\theta+\mathbf N=(ZU\|\theta\|+\mathbf N_1)e_1+\sum_{k=2}^d \mathbf N_k e_k
\end{equation*}
The expectation thus becomes
\begin{equation*}
q_\perp=\mathbb E\Big[\frac{\mathbf N_2^2}{(ZU\|\theta\|+\mathbf N_1)^2 + \sum_{k=2}^d \mathbf N_k^2}\Big]\,.
\end{equation*}
Denote $D$ the denominator in the previous expression. To handle it, we use the integral identity
\begin{equation*}
\frac{1}{D} = \int_0^{+\infty} e^{-vD}dv\,.
\end{equation*}
Conditioned on $Z, U$, we have that
\begin{align*}
\E\Big[\frac{\mathbf N_2^2}{D}\,\Big|\,Z, U\Big]
&= \E\Big[\int_0^{+\infty} e^{-v(ZU\|\theta\|+\mathbf N_1)^2}\, \mathbf N_2^2 e^{-v\mathbf N_2^2} \,\prod_{k=3}^d e^{-v\mathbf N_k^2} \,dv\Big]\\
&=\int_0^{+\infty} \E[e^{-v(ZU\|\theta\|+\mathbf N_1)^2}] \,\E[N_2^2 e^{-v\mathbf N_2^2}]\prod_{k=3}^d \E[e^{-v\mathbf N_k^2}] \,dv
\end{align*}
where the second line comes from the independence of the coordinates of $\mathbf N$. Evaluating each of these standard Gaussian expectation yields
\begin{equation*}
\E\Big[\frac{\mathbf N_2^2}{D}\,\Big|\,Z, U\Big]
= \int_0^{+\infty} (1+2v)^{-d/2-1} \exp\Big(-\frac{vZ^2U^2\|\theta\|^2} {1+2v}\Big)dv\,.
\end{equation*}
Making the change of variable $s=\frac{2v}{1+2v}$, we can simplify the integral to
\begin{equation*}
\E\Big[\frac{\mathbf N_2^2}{D}\,\Big|\,Z, U\Big]
= \frac{1}{2}\int_0^1 (1-s)^{d/2-1} e^{-\frac{1}{2}Z^2U^2\|\theta\|^2s } ds
\end{equation*}
Then, by tower rule,
\begin{equation*}
q_\perp=\frac{1}{2}\int_0^1 (1-s)^{d/2-1} \E[e^{-\frac{1}{2}Z^2U^2\|\theta\|^2s}]\,ds
\end{equation*}
Conditioning on $U$, $Z\sim\chi^2_{d+1}$ so by expression of its moment-generating function,
\begin{equation*}
\E[e^{-\frac{1}{2}Z^2U^2\|\theta\|^2s}|U] = (1+U^2\|\theta\|^2s)^{-(d+1)/2}
\end{equation*}
which gives
\begin{equation*}
    q_\perp= \frac{1}{2}\int_0^1 (1-s)^{d/2-1} \int_0^1 (1+u^2\|\theta\|^2 s)^{-(d+1)/2}\,du\,ds
\end{equation*}
These integrals are converging and using properties of generalized hypergeometric functions \citep{gradshteyn2007table}, one can show that
\begin{align*}
q_\perp 
&= \frac{1}{2}\int_0^1 (1-s)^{d/2-1}\,{}_2F_1\Big(\frac{1}{2},\frac{d+1}{2};\frac{3}{2};-\|\theta\|^2s\Big)\,ds\\
&= \frac{1}{d}\,_3F_2 \Big(\frac{1}{2},1,\frac{d+1}{2};\frac{3}{2},\frac{d}{2}+1;-\|\theta\|^2\Big)\notag\\
&=\frac{\Gamma(\frac{d}{2}+1)}{d\sqrt{\pi}\Gamma(\frac{d}{2})}\int_0^1 s^{\frac{d+3}{2}}(1-s)^{-\frac{1}{2}} \,{}_2F_1\Big(\frac{1}{2},1;\frac{3}{2};-\|\theta\|^2s\Big)\,ds\,.
\end{align*}
Finally, using the closed-form expression of ${}_2F_1(\frac{1}{2},1;\frac{3}{2};\cdot)$ and the change of variable $s=\sin^2\phi$, we get
\begin{align}\label{eq:proof_prop10_26}
q_\perp= \frac{\Gamma(\frac{d}{2})}{\|\theta\|\sqrt{\pi}\Gamma(\frac{d+1}{2})} \int_0^{\pi/2}\sin^{d-1}(\phi) \arctan(\|\theta\| \sin \phi) \,d\phi\,.
\end{align}

\paragraph{Step 3: Bounding $q_\perp$.}
We now want to lower-bound both $q$ and $q_\perp$, using the integral expression of $q_\perp$ in \eqref{eq:proof_prop10_26}.
First, notice that for all $x\ge 0$, $\arctan x \le x$. Thus, we have that
\begin{align*}
q_\perp
&\le \frac{\Gamma(\frac{d}{2})}{\sqrt{\pi}\Gamma(\frac{d+1}{2})} \int_0^{\pi/2}\sin^{d}(\phi) \,d\phi\,.
\end{align*}
Evaluating the integral, we get
\begin{equation*}
q_\perp \le \frac{1}{d}\,.
\end{equation*}
Now, using Equation~\eqref{eq:proof_prop10_25} linking $q$ and $q_\perp$,
\begin{equation}
q\ge 1-\frac{d-1}{d}=\frac{1}{d}\,.
\end{equation}
Second, to lower-bound $q_\perp$, note that for all $x\ge0$, $\arctan x \ge \frac{x}{1+x}$. Thus,
\begin{align*}
q_\perp
&\ge \frac{\Gamma(\frac{d}{2})}{\sqrt{\pi}\Gamma(\frac{d+1}{2})} \int_0^{\pi/2}\frac{\sin^{d}(\phi)}{1+\|\theta\|\sin\phi} \,d\phi\,.
\end{align*}
The denominator of the integrand is smaller than $1+\|\theta\|$. Hence,
\begin{equation}
q_\perp \ge \frac{1}{d(\|\theta\|+1)}\,.
\end{equation}

\paragraph{Step 4: Expression of $Q^{-1}$ and $\nabla^2\calr^*(\theta)$.} We recall that we have
$$Q= q\,P_\theta + q_\perp\,\Pperp.$$
In Steps 2 and 3, we have shown that $q$ and $q_\perp$ are positive. Moreover, because the matrices $P_\theta$ and $\Pperp$ satisfy the following identities
\[P_\theta^2=P_\theta,\qquad\Pperp^2=\Pperp,\qquad P_\theta\Pperp=\Pperp P_\theta=0,\qquad P_\theta+\Pperp=I_d,\]
we have that
$$Q^{-1}= q^{-1}\,P_\theta + q_\perp^{-1}\,\Pperp.$$
For the Hessian of $\calr^*$ in $\theta$, we recall that $\nabla\calr^*(\theta)$ is a radial function with magnitude $\frac{\sqrt{1+\|\theta\|^2}-1}{\|\theta\|}$. Thus,
\begin{align*}
    \nabla^2\calr^*(\theta) = \frac{\sqrt{1+\|\theta\|^2}-1}{\|\theta\|^2 \sqrt{1+\|\theta\|^2}}\,P_\theta + \frac{\sqrt{1+\|\theta\|^2}-1}{\|\theta\|^2}\,\Pperp.
\end{align*}
\paragraph{Step 5: Bounding $\E[\|\hat y\|_t^2]$.}
By definition of $\hat y$,
$$\hat y=Q^{-1}a\sca{y}{a}.$$
Because the loss $\sca{y}{a}$ is bounded by 1 almost surely, we have that
\begin{align*}
    \E[\|\hat y\|_t^2] &= \E[\sca{y}{a}a^\top Q^{-1}\nabla^2\calr^*(\theta)Q^{-1}a\sca{y}{a}]\\
    &\le \E[a^\top Q^{-1}\nabla^2\calr^*(\theta)Q^{-1}a]\\
    &= \tr(Q^{-1}\nabla^2\calr^*(\theta)).
\end{align*}
Plugging the expressions of $Q^{-1}$ and $\nabla^2\calr^*(\theta)$ into this last inequality, we get
\begin{align*}
\E[\|\hat y\|_t^2] &\le q^{-1} \frac{\sqrt{1+\|\theta\|^2}-1}{\|\theta\|^2 \sqrt{1+\|\theta\|^2}} \,\tr(P_\theta) + q_\perp^{-1} \frac{\sqrt{1+\|\theta\|^2}-1}{\|\theta\|^2}\,\tr(\Pperp)\\
&\le d \frac{\sqrt{1+\|\theta\|^2}-1}{\|\theta\|^2 \sqrt{1+\|\theta\|^2}} + d(d-1)(\|\theta\|+1) \frac{\sqrt{1+\|\theta\|^2}-1}{\|\theta\|^2}
\end{align*}
We now need to bound the functions
\[f(t)= \frac{\sqrt{1+t^2}-1}{t^2 \sqrt{1+t^2}}\,\quad\text{and}\quad g(t) = \frac{(t+1)(\sqrt{1+t^2}-1)}{t^2}\,,\]
for all $t\ge 0$.
Define
\[h(t)= t^2\sqrt{1+t^2}-2(\sqrt{1+t^2}-1)\,,\quad t\ge 0.\]
The derivative of $h$ is
\[h'(t) = \frac{3t^3}{\sqrt{1+t^2}} \ge 0.\]
Thus, for all $t\ge 0$, $h(t)\ge h(0)=0$. Thus, by rearranging the terms, for all $t\ge 0$,
$$f(t)\le \frac{1}{2}.$$
To bound $g(t)$, a routine calculation shows
$$g'(t) = \frac{t^3(t+1-\sqrt{1+t^2})}{t^4\sqrt{1+t^2}(\sqrt{1+t^2}+1)}\,.$$
The denominator is always non-negative. For the numerator, we split into two cases based on the sign of $t$. If $t\ge 0$, $t^3\ge 0$ and $\sqrt{1+t^2} \le 1+t$, so the numerator is non-negative. Else, if $t\le 0$, $\sqrt{1+t^2}\ge 1$ so $t+1-\sqrt{1+t^2}\le t$, and $t^3\le 0$, so the numerator is greater than $t^4$. Hence it is non-negative. So $g'(t)\ge 0$ for all $t\ge 0$. So $g$ is non-decreasing on $\R$. Moreover, $\lim_{t\to+\infty} g(t)=1$, so for all $t\in\R$, $g(t)\le 1$.
Thus, we conclude
\begin{equation*}
\E[\|\hat y\|^2_t] \le \frac{1}{2}d + d(d-1) \le d^2\,.
\end{equation*}
This inequality implies the stated inequality in $\frac{5}{4}d^2$. 

\paragraph{Step 6: Bounding $\|\hat y\|_t^2$ almost surely.}
Because the loss $\sca{y}{a}$ is bounded by 1 almost surely, we have that
\begin{align*}
    \|\hat y\|_t^2 &\le a^\top Q^{-1}\nabla^2\calr^*(\theta)Q^{-1}a\\
    &= a^\top\left(q^{-2}\frac{\sqrt{1+\|\theta\|^2}-1}{\|\theta\|^2 \sqrt{1+\|\theta\|^2}} \,P_\theta + q_\perp^{-2} \frac{\sqrt{1+\|\theta\|^2}-1}{\|\theta\|^2}\,\Pperp\right)a.
\end{align*}
For any projection $P$, we can bound $a^\top Pa$ with $a^\top Pa=\|Pa\|^2 \le \|a\|^2=1$. Thus, applying the bounds on $q$ and $q_\perp$,
\begin{align*}
\|\hat y\|_t^2 
&\le d^2 \frac{\sqrt{1+\|\theta\|^2}-1}{\|\theta\|^2 \sqrt{1+\|\theta\|^2}} + d^2 (\|\theta\|+1)^2\frac{\sqrt{1+\|\theta\|^2}-1}{\|\theta\|^2}\\
&\le d^2 f(\|\theta\|) + d^2(\|\theta\|+1) g(\|\theta\|)\,
\end{align*}
where $f$ and $g$ have been defined above. Using the bounds on $f$ and $g$ obtained previously, we get
$$\|\hat y\|_t^2 \le \frac{1}{2}d^2+ d^2\left(\|\theta\|+1\right) \le d^2\|\theta\| + \frac{3}{2}d^2\,,$$
which implies the stated bound.
\paragraph{Step 7: Bounding $\E[\|\hat y\|_2^2]$.}
We bound the variance of the $\ell_2$ norm of $\hat y$ by
\begin{align*}
\E[\|\hat y\|_2^2] &= \E[\sca{y}{a}a^\top Q^{-1} Q^{-1} a \sca{y}{a}]\\
&\le\E[a^\top Q^{-1} Q^{-1} a]\\
&=\tr(Q^{-1}).
\end{align*}
Using the expression of $Q^{-1}$ and the bounds on $q$ and $q_\perp$, we bound the trace of $Q^{-1}$ by
\begin{align*}
    \tr(Q^{-1}) &= q^{-1} \,\tr(P_\theta)+q_\perp^{-1}\,\tr(\Pperp)\\
    &\le d+d(\|\theta\|+1)(d-1).
\end{align*}
Hence,
$$\E[\|\hat y\|_2^2]\le d^2\|\theta\|^2+2d^2.$$
\paragraph{Step 8: $\theta=0$ Case.}
Until now we have assumed that $\theta\ne 0$. Indeed, we cannot define $P_\theta=\theta\theta^\top/\|\theta\|^2$ for $\theta=0$. We now show that the required bounds continue to hold in the degenerate case $\theta=0$. Since $\xi$ is spherically symmetric, $a=\xi/\|\xi\|$ follows a uniform distribution on $\S^{d-1}$. Thus, its covariance matrix is
$$Q=\frac{1}{d}I_d.$$
A direct computation shows that the Hessian of $\calr^*$ in 0 is
$$\nabla^2\calr^*(0)=\frac{1}{2}I_d.$$
Then, $\E[\|\hat y_t\|_t^2]$ simplifies to
$$\E[\|\hat y\|_t^2] = \frac{d^2}{2} \E[y^\top aa^\top aa^\top y].$$
Since $a^\top a=1$, $\E[aa^\top aa^\top]=Q$ and we have
$$\E[\|\hat y\|_t^2] = \frac{d}{2}\|y\|^2\le \frac{d}{2}.$$
For the almost-sure bound, we have similarly
$$\|\hat y\|_t^2 = \frac{d^2}{2} (y^\top a)^2\le \frac{d^2}{2}.$$
And for the $\ell_2$ norm bound,
$$\E[\|\hat y\|_2^2] = d^2\,\E[y^\top aa^\top aa^\top y]\le d.$$
Since $\theta=0$, the above bounds are consistent with the general inequalities.
\end{proof}

\subsection{Proof of Theorem \ref{thm:regret_scftpl_ball}}\label{appendix:regret_scftpl_ball}

\begin{proof}
\paragraph{Step 1: Establishing that $\eta\| \hat y_t\|_t \le 1/2$ holds with high-probability.}
Define the event
\[E\coloneq \left\{\forall t\in[n], \eta \,\|\hat y_t\|_t\le1/2\right\}.\]
From Proposition \ref{prop:scperturbation_ball_bound}, we can bound the local norm of the estimator almost-surely by $\|\hat y_t\|_t^2 \le d^2\,\eta\,\|\hat Y_{t-1}\|+4d^2$ almost surely. It follows that
\[E \supseteq \Big\{\forall t\in[n],\, \|\hat Y_{t-1}\| \le \frac{1}{4\,d^2\,\eta^3}-\frac{4}{\eta}\Big\}.\]
Using the expression of $\eta$ and the assumption that $\frac{n}{\ln n}\ge 64$,
\begin{align}\label{eq:th5__1}
    E&\supseteq\Big\{\sup_{0\le t\le n-1} \|{\hat Y_{t}\|}^2 \le \frac{5}{4}d^2\frac{n}{\ln n}\Big(\frac{5}{16}\frac{n}{\ln n}-4\Big)^2\Big\}
    \notag\\
    &\supseteq \Big\{\sup_{0\le t\le n-1} \|{\hat Y_{t}\|}^2 \le \frac{5}{64} d^2 \Big(\frac{n}{\ln n}\Big)^3\Big\}\,.
\end{align}
For all $t\in[n]$, we can bound $\|\hat Y_t\|^2$ in expectation using the third inequality of Proposition \ref{prop:scperturbation_ball_bound}, which yields
\begin{align*}
\E_{t-1}[\|\hat Y_t\|_2] &=\|{\hat Y_{t-1}\|}^2+\E_{t-1}[\|{\hat y_t\|}^2]+2\sca{y_t}{\hat Y_{t-1}} \\
    &\le \|\hat Y_{t-1}\|^2+d^2\eta\,\|\hat Y_{t-1}\|+2d^2+2\|\hat Y_{t-1}\|.
\end{align*}
Using the assumption that $n/\ln n \ge 2d^2$, we have $d^2\eta\le \sqrt{2/5}$ and $2d^2 \le n$, so the bound becomes
\[\E_{t-1}[\|\hat Y_t\|_2]\le\|\hat Y_{t-1}\|^2+\left(2+\sqrt{\frac{2}{5}}\right)^2\,\|\hat Y_{t-1}\|+n.\]
By the AM-GM inequality, this trinomial can be bounded by
\begin{align}\label{eq:th5__2}
\E_{t-1}[\|\hat Y_t\|_2]&\le \left(1+\frac{1}{n}\right)\|\hat Y_{t-1}\|^2 + \frac{1}{4}\left(2+\sqrt{\frac{2}{5}}\right)^2 n+n \notag\\
&\le \left(1+\frac{1}{n}\right)\|\hat Y_{t-1}\|^2+3n.
\end{align}
If we now define the positive, adapted, stochastic process
\[M_t\coloneq  \left(1+\frac{1}{n}\right)^{-t} \left(\|\hat Y_t\|^2+3n^2\right),\]
then (\ref{eq:th5__2}) bounds the conditional expectancy of $M_t$ by
\begin{align*}
    \E_{t-1}[M_t] \le \Big(1+\frac{1}{n}\Big)^{-t}\Big(\Big(1+\frac{1}{n}\Big)\|\hat Y_{t-1}\|^2+3n+3n^2\Big)=M_{t-1}\,.
\end{align*}
Hence, $(M_t)_t$ is a supermartingale, adapted to the filtration $\{\mathcal{F}_t\}_t$. Therefore, by Ville's inequality, for all $\varepsilon\ge 0$,
\begin{equation*}
    \P\Big[\sup_{0\le t\le n-1} M_t\ge \varepsilon\Big]\le \frac{\E[M_0]}{\varepsilon} = \frac{3n^2}{\varepsilon}\,.
\end{equation*}
Moreover, for all $t\in[n]$, it holds that
\begin{equation*}
    \|\hat Y_t\|^2=\left(1+\frac{1}{n}\right)^t M_t-3n^2 \le e\,M_t\,.
\end{equation*}
Combining this last inequality with (\ref{eq:th5__1}), we get
\begin{align*}
    E\supseteq \Big\{\sup_{0\le t\le n-1} M_t \le \frac{5}{64\,e}d^2\Big(\frac{n}{\ln n}\Big)^3\Big\}\,.
\end{align*}
Hence,
\begin{align}\label{eq:th5__4}
    \P(E^c)\le\P\Big[\sup_{0\le t <n}\|\hat Y_t\|^2 > \frac{5}{64}d^2\Big(\frac{n}{\ln n}\Big)^3\Big]\le \frac{192\,e}{5}\frac{(\ln n)^3}{d^2\,n}\,.
\end{align}

\paragraph{Step 2: Bounding the regret $R_n$.}
In order to bound the regret using similar arguments as in Theorem \ref{thm:self-conc-linear-bandits}, we introduce an appropriate stopping time and invoke Doob’s optional stopping theorem to bound the regret conditionally on $E$. Define the random variable
\begin{equation*}
    \tau\coloneq\inf\Big\{t<n:\|\hat Y_t\|^2 > \frac{5}{64}d^2\Big(\frac{n}{\ln n}\Big)^3\Big\}\wedge n\,.
\end{equation*}
Because $(\hat Y_t)_t$ is a stochastic process adapted to the filtration $(\mathcal F_t)_t$, $\tau$ is a stopping time with respect to the same filtration.
For all $u\in K$, we can decompose the regret of \SCFTPL\space w.r.t $u$ as
\begin{equation}\label{eq:th5__5}
    R_n(u) = \E\Big[\sum_{t=1}^\tau \sca{y_t}{a_t-u}\Big]+\E\Big[\sum_{t=\tau+1}^n\sca{y_t}{a_t-u}\Big]\,.
\end{equation}
We now proceed to bound each term of the right-hand side of (\ref{eq:th5__5}) separately. For the second term, we have
\begin{align}\label{eq:th5__6}
    \E\Big[\sum_{t=\tau+1}^n\sca{y_t}{a_t-u}\Big] &\le \P(\tau <n)\,\E\Big[\sum_{t=1}^n|\sca{y_t}{a_t-u}|\Big|\tau<n\Big]\notag\\
    &\le \P\Big[\sup_{0\le t <n}\|\hat Y_t\|^2 > \frac{5}{64}d^2\Big(\frac{n}{\ln n}\Big)^3\Big]\;2n\notag\\
    &\le \frac{384\,e}{5\,d^2}\,\ln ^3 n\,,
\end{align}
where the second inequality follows from the boundedness of the loss vectors $y_t$ and the last inequality follows from (\ref{eq:th5__4}). For the first term, let
\begin{equation*}
    B_t\coloneq \sum_{s=1}^t \sca{y_s}{a_s-u}-\sca{\hat y_s}{x_s-u}\,.
\end{equation*}
The process $(B_t)_t$ is adapted to the filtration $(\mathcal F_t)_t$ and for all $t\in[n]$,
\begin{align*}
    \E_{t-1}[B_t]=B_{t-1}+\sca{y_t}{\E_{t-1}[a_t-u]} - \sca{\E_{t-1}[\hat y_t]}{x_t-u}=B_{t-1}\,.
\end{align*}
Therefore, $(B_t)_t$ is a martingale with $B_0=0$. Thus, by Doob's optional stopping theorem, $\E[B_\tau]=0$, i.e.,
\begin{equation*}
    \E\Big[\sum_{t=1}^\tau \sca{y_t}{a_t-u}\Big]=\E\Big[\sum_{t=1}^\tau\sca{\hat y_t}{x_t-u}\Big]\,.
\end{equation*}
Moreover, for all $t\le \tau$, we have that $\|\hat Y_{t-1}\|^2\le \frac{d^2}{8}(\frac{n}{\ln n})^3$, and thus $\eta\|\hat y_t\|_t\le \frac{1}{2}$. Thus, we can bound $\sum_{t=1}^\tau\E[\sca{\hat y_t}{x_t-u}]$ following the arguments from the proofs of Lemma \ref{lemma:bandits_gbpa} and of Theorem \ref{thm:self-conc-linear-bandits}, which yield
\begin{equation}\label{eq:th5__7}
    \E\Big[\sum_{t=1}^\tau\sca{\hat y_t}{x_t-u}\Big]\le \frac{\vartheta \ln n}{\eta} + \eta \sum_{t=1}^n \E[\|\hat y_t\|_t^2]+2\,.
\end{equation}
Note that Theorem \ref{thm:self-conc-linear-bandits} could not have been applied directly, as we needed the bound $\eta\|\hat y_t\|_t\le \frac{1}{2}$ to hold almost-surely. Also, conditioning the expectation in the regret definition by the event $E$ would not have enabled to use the unbiasedness of $a_t$ and $\hat y_{t}$, which justifies the need for the stopping time argument we have developed. Combining (\ref{eq:th5__6}) and (\ref{eq:th5__7}), we have that
\begin{align*}
    R_n(u) &\le \frac{\vartheta \ln n}{\eta} + \eta \sum_{t=1}^n \E[\|\hat y_t\|_t^2]+2 +\frac{384\,e}{5}\,\frac{\ln^3n}{d^2}\\
    &\le \frac{\ln n}{\eta}+\frac{5}{4}\,\eta\,n\,d^2+2+\frac{384\,e}{5}\,\frac{\ln^3n}{d^2}\,,
\end{align*}
where the last inequality follows from the value of the self-concordance parameter $\vartheta=1$ and from the bound on $\E[\|\hat y_t\|_t^2]$ of Proposition \ref{prop:scperturbation_ball_bound}. Finally, plugging the value of $\eta$ and taking the supremum over all $u\in K$, we have
\begin{equation*}
    R_n\le d\sqrt{5 \,n\ln n}+2+\frac{384\,e}{5}\frac{\ln^3n}{d^2}\,.
\end{equation*}
\end{proof}

\subsection{Complexity Analysis of \SCFTPL\space on the $\ell_2$ Ball}\label{appendix:complexity_analysis_ball}

At each round $t\in[n]$, \SCFTPL\space performs theses three main steps: 
\begin{enumerate}
    \item Sampling $\xi_t\sim\cald$,
    \item Solving the linear program $a_t=\argmin_{a\in\B^d}\sca{a}{\eta \hat Y_{t-1}-\xi_t}$, and
    \item Computing the estimator $\hat y_t=Q_t^{-1}a_t\sca{y_t}{a_t}$.
\end{enumerate}
Sampling from $\cald$ given in Proposition \ref{prop:scperturbation_ball_existence} reduces to drawing $\mathbf T$ from a multivariate $t$-distribution with location 0, scale matrix $I_d$ and $d+1$ degrees of freedom, and $U$ from a uniform distribution on $[0,1]$ independently of $\mathbf T$, and to compute
$$\xi_t=\frac{1}{\sqrt{d+1}\,U}\mathbf T.$$
By the stochastic representation of the multivariate $t$-distribution \citep{kotz2004multivariate}, sampling $\mathbf T$ amounts to sampling $\mathbf N\sim \mathcal N(0,I_d)$ and $Z\sim \chi_{d+1}$ independently and computing $$\mathbf T= \frac{\sqrt{d+1}}{Z}\mathbf N.$$
Sampling from a standard multivariate normal distribution can be done in $\O(d)$ operations, while sampling from a $\chi_{d+1}$ distribution or from an uniform distribution on $[0,1]$ can be done in constant times. Hence, the overall sampling step requires $\O(d)$ time.

The linear program over the Euclidean ball admits the closed-form solution
$$a_t = \left[\frac{-\eta \hat Y_{t-1,i}+ \xi_{t,i}}{\|-\eta \hat Y_{t-1}+ \xi_{t}\|}\right]_{i=1}^d.$$
The norm of $-\eta \hat Y_{t-1}+ \xi_{t}$ can be computed in $\O(d)$ operations, after which each coordinate of $a_t$ can be computed in constant time, yielding overall $\O(d)$ complexity.

For the estimator, from the proof of Proposition~\ref{prop:scperturbation_ball_bound}, we have that
$$Q_t^{-1} = q^{-1}\,P_{\theta}+q_\perp^{-1}\,\Pperp,$$
where
\begin{equation*}
\theta=-\eta\hat Y_{t-1}
,\qquad
P_\theta=\frac{\theta\theta^\top}{\|\theta\|^2}
,\qquad
\Pperp = I_d-P_\theta
\end{equation*}
and $q$ and $q_\perp$ are given by
\begin{equation*}
q=1-(d-1)q_\perp
,\qquad
q_\perp = \frac{\Gamma(\frac{d}{2})}{\|\theta\|\sqrt{\pi}\Gamma(\frac{d+1}{2})} \int_0^{\pi/2}\sin^{d-1}(\phi) \arctan(\|\theta\| \sin \phi) \,d\phi\,.
\end{equation*}
A closed-form expression for $q_\perp$ would involve the generalized hypergeometric function ${}_3F_2$, for which there exists no general numerical evaluation method. Instead, this integral expression can be computed using Gauss-Legendre quadrature. Because the integrand in analytic, this method yields exponential convergence.

Once $q_\perp$ has been numerically computed, $q$ follows and
$$Q_t^{-1}a_t = q_\perp^{-1} a_t + (q^{-1}-q_\perp^{-1})\,\frac{\sca{a_t}{\theta}}{\|\theta\|^2}\theta.$$
So $Q_t^{-1}a_t$ can be computed in $\O(d)$ time knowing $q$ and $q_\perp$.

Combining all steps, the per-round computational complexity of \SCFTPL\space on the $\ell_2$-ball is $\mathcal O(d)$.

\end{document}